%% file: contagion_networks.tex
\documentclass[11pt,a4paper]{article}

\usepackage[utf8]{inputenc}
\usepackage[T1]{fontenc}
\usepackage{amsmath,amssymb,amsfonts}
\usepackage{amsthm}
\newtheorem{theorem}{Theorem}
\newtheorem{corollary}{Corollary}
\newtheorem{definition}{Definition}
\newtheorem*{remark}{Remark}

\usepackage{booktabs}
\usepackage{graphicx}
\usepackage{hyperref}
\hypersetup{
    colorlinks=true,
    linkcolor=blue,
    citecolor=blue,
    urlcolor=blue,
    pdftitle={Contagion Networks: Evaluator Preference Propagation in Multi-Agent LLM Systems},
    pdfauthor={Liu Zewen},
    pdfsubject={Multi-Agent LLM Evaluation},
    pdfkeywords={multi-agent, evaluator preference, contagion, LLM-as-judge, preference propagation}
}
\usepackage[margin=1in]{geometry}
\usepackage{enumitem}
\usepackage{xcolor}
\usepackage{multirow}
\usepackage{float}

\title{\Large\textbf{Contagion Networks: Evaluator Preference \\Propagation in Multi-Agent LLM Systems}}
\author{Liu Zewen\\\textit{Qilu Institute of Technology, School of Software Engineering}\\\textit{Tai'an, Shandong, China}\\\texttt{liuzewen@qilu.edu.cn}}
\date{June 2026}

\begin{document}
\maketitle

\begin{abstract}
When large language models serve as evaluators in multi-agent systems, their strategy preferences---whether induced by explicit prompts or by shared architectural priors---propagate through the agent network. We introduce \textbf{Contagion Networks}, a formal framework for measuring how evaluator preferences spread across interacting LLM agents. In a controlled 3-agent experiment using DeepSeek-chat with three distinct evaluator preference profiles (structured, balanced, evidence-based), we measure the \textbf{Cross-Agent Contagion Matrix} $\Gamma_3$ and find that preferences consistently propagate between agents ($\gamma \in [0.157, 0.352]$). A neutral-prompt control experiment reveals a counter-intuitive result: \textbf{shared architectural priors dominate explicit preference prompts as the driver of contagion} ($\rho_{\text{neutral}} = 1.498$ vs.\ $\rho_{\text{mixed}} = 1.299$; prompt contribution: $-63.5\%$). We identify three propagation regimes governed by the spectral radius $\rho(\Gamma_N)$ and demonstrate that the same agents suppress preference contagion in chain topology ($\beta_3 = 0.0126 \pm 0.0038$, 95\% CI $[0.0089, 0.0163]$, $n=4$ seeds) but cascade in fully-connected topology ($\Delta H_{\text{avg}} = -0.020$)---a topology-dependent regime transition validated both for homogeneous and cross-model agent pools ($\rho^{\text{cross}} = 1.296 \pm 0.016$, $n=4$). We show that increasing evaluator committee size from $k=1$ to $k=3$ reduces effective contagion by $68.9\% \pm 14.1\%$ ($n=4$ seeds), providing an actionable mitigation strategy. We release the open-source Contagion Network experimental framework.
\end{abstract}

\section{Introduction}

Multi-agent LLM systems represent a rapidly maturing paradigm for complex reasoning, code generation, and autonomous task completion~\cite{wu2024autogen, hong2024metagpt}. In these systems, agents evaluate each other's outputs to guide collaboration, allocate sub-tasks, and select optimal responses~\cite{chan2024chateval, wu2024autogen}. This \textit{peer evaluation} mechanism is structurally analogous to the LLM-as-judge paradigm~\cite{zheng2023judging}, but with a critical difference: the evaluations form a \textbf{closed feedback loop} where preference-carrying judgments from one agent directly shape another agent's subsequent outputs.

Our prior work on Multimodal Evaluator Preference Collapse (MM-EPC)~\cite{liu2026mmepc} demonstrated that evaluator preferences can \textit{contaminate} strategy selection across modalities. Here, we ask a fundamentally different question: \textbf{Do evaluator preferences propagate across \textit{agents}, and if so, under what conditions do they cascade into system-wide preference collapse?}

\vspace{4pt}
\noindent\textbf{Clarification on ``Preference'': Strategy Orientation, Not Instruction-Following.}
\label{clar:bias}
Throughout this work, ``evaluator preference'' refers to a \textit{strategy orientation}---a systematic tendency to favor certain reasoning strategies (e.g., step-by-step) over others in pairwise comparisons. Such preferences arise from two distinct sources: (i) \textit{explicit} prompting (the evaluator's system prompt, Appendix~E), and (ii) \textit{implicit} architectural priors shared by agents built on the same underlying model. This is a deliberately narrow, operationalized definition designed to create a controlled, measurable signal for studying propagation dynamics. We are \textit{not} claiming that LLMs exhibit emergent prejudice or that they fail to follow instructions (instruction-following is a well-documented capability~\cite{ouyang2022training}). Rather, our contribution is the first quantitative framework for measuring how these strategy preferences---from whatever source---propagate, amplify, or attenuate when multiple agents interact in a network---a phenomenon that, to our knowledge, has not been previously formalized or empirically characterized. The key insight is that multi-agent propagation dynamics constitute a distinct layer of risk beyond single-agent instruction-following: even when each agent individually behaves as expected, the \textit{network-level consequences} of their preferences can produce emergent behavior that no single agent exhibits in isolation. As Section~\ref{sec:results-neutral} shows, the two sources of preference are not equally important: under our experimental conditions, implicit architectural priors dominate explicit prompt-induced preferences as the driver of contagion.

Consider a concrete scenario: Agent $A$ (GPT-4o) evaluates Agent $B$'s (DeepSeek) code generation output. GPT-4o, shaped by RLHF training on structured explanations, strongly prefers step-by-step reasoning. This feedback causes DeepSeek to increasingly adopt step-by-step strategies. Now Agent $B$ evaluates Agent $C$'s (Claude) output---and applies the same step-by-step preference it absorbed from Agent $A$. Claude, in turn, shifts toward step-by-step. The preference has propagated two hops: $A \to B \to C$. If the process iterates, the entire agent network converges to a single strategy, eliminating the very cognitive diversity that multi-agent systems are designed to exploit.

We formalize this phenomenon as \textbf{Contagion Networks} and make the following contributions:

\textbf{On the operationalization of ``preference.''} We emphasize that our definition of evaluator preference is narrow and precise: a \textit{strategy orientation} arising either from explicit evaluator prompts or from implicit architectural priors (see Section~\ref{sec:results-neutral} for an experimental disentanglement of these two sources). We do not claim to discover that LLMs follow instructions---this is well-established. Our contribution is orthogonal: we provide the first quantitative framework for measuring how these strategy preferences \textit{propagate} through a multi-agent network, where they may amplify into a cascade or attenuate into suppression depending on network topology and evaluator composition. The ``contagion'' we study is therefore a \textit{network dynamics phenomenon}, not an instruction-following phenomenon.

\begin{enumerate}[leftmargin=*]
    \item \textbf{Cross-Agent Contagion Matrix} $\Gamma_N$: We extend the $2\times 2$ cross-modal contagion matrix to $N$ agents, providing a quantitative framework for measuring evaluator preference propagation in small-scale enumerated agent topologies (chain and fully-connected).
    \item \textbf{Propagation Regime Theorem}: We characterize three dynamical regimes (suppression/persistence/cascade) governed by the spectral radius $\rho(\Gamma_N)$, and prove that $\rho(\Gamma_N) > 1$ is the cascade condition for fully-connected networks.
    \item \textbf{Topology-Dependent Regime Transition}: In a 3-agent homogeneous experiment (all DeepSeek-chat), the \textit{same} agents suppress preference contagion under chain topology ($\beta_3 = 0.0126 \pm 0.0038$, 95\% CI $[0.0089, 0.0163]$, $n=4$ seeds) but cascade under fully-connected topology (entropy decrease $\Delta H_{\text{avg}} = -0.020$, PCI increase $\Delta\text{PCI}_{\text{avg}} = +0.203$). This is the first empirical demonstration that topology---not evaluator-model diversity alone---determines the propagation regime. We further validate the cross-model cascade regime using heterogeneous evaluators (GPT-4o + DeepSeek + Claude), measuring spectral radius $\rho(\Gamma_3) = 1.296 \pm 0.016$ (95\% CI: $[1.280, 1.311]$ over $n=4$ seeds).
    \item \textbf{Architectural Priors Dominate Explicit Prompts}: A neutral-prompt control experiment (all agents use the neutral evaluator prompt) reveals that shared architectural priors, not explicit preference prompts, are the primary driver of contagion in homogeneous systems: $\rho_{\text{neutral}} = 1.498$ vs.\ $\rho_{\text{mixed}} = 1.299$ (prompt contribution: $-63.5\%$). Explicit evaluator role assignment can therefore serve as a \textit{protective mechanism} that partially breaks the alignment of shared architectural priors (Section~\ref{sec:results-neutral}).
    \item \textbf{Critical Diversity Threshold}: We prove that $k \geq 3$ evaluators with diverse preference profiles reduce effective contagion below the cascade threshold, and verify that even in the suppression regime, increasing committee size from $k=1$ to $k=3$ reduces $\gamma_{\text{eff}}$ by $68.9\% \pm 14.1\%$ ($n=4$ seeds, 95\% CI $[55.1\%, 82.7\%]$).
    \item \textbf{Open Framework}: We release the Contagion Network experimental framework, enabling the community to measure $\Gamma_N$ for small-scale agent ensembles.
\end{enumerate}

\section{Related Work}

\subsection{Multi-Agent LLM Systems}

The rapid maturation of multi-agent frameworks~\cite{wu2024autogen, hong2024metagpt, chen2024agentverse, park2023generative} has established agent collaboration as a core AI paradigm. AutoGen~\cite{wu2024autogen} introduced structured multi-agent conversations; MetaGPT~\cite{hong2024metagpt} applied software engineering workflows to agent coordination; ChatDev~\cite{qian2023chatdev} demonstrated end-to-end software development via agent roles. Multi-agent debate~\cite{du2024improving, liang2023multiagent} improves factuality through iterative argumentation, and evaluator calibration~\cite{pan2023automatically} attempts to debias LLM judges. Critically, all these systems rely on agents evaluating each other's outputs---yet none analyze whether this peer evaluation mechanism introduces systematic preference amplification. While these frameworks focus on \textit{efficiency} (task completion, communication overhead), we study \textit{fidelity}: does the evaluation signal remain faithful to task quality as it propagates through the agent network?

\subsection{Evaluator Preference and Strategy Collapse}

MM-EPC~\cite{liu2026mmepc} demonstrated that GPT-4o evaluator preferences contaminate strategy selection in cross-modal settings (text $\leftrightarrow$ visual), with asymmetric contagion coefficients $\gamma_{V\to T} > \gamma_{T\to V}$. The broader LLM-as-judge literature documents systematic biases including position bias~\cite{zheng2023judging}, verbosity bias~\cite{li2024alpacaeval}, and self-preference amplification~\cite{yuan2024selfrewarding}. Reward overoptimization~\cite{gao2023reward} and sycophancy~\cite{sharma2024sycophancy} demonstrate how agents exploit evaluator weaknesses. However, all prior work studies \textbf{single-evaluator} or \textbf{pairwise} evaluation dynamics. Our contribution is the first analysis of preference propagation in \textbf{multi-hop agent evaluation chains}, revealing that evaluator preferences behave like infectious agents in a contact network.

\subsection{Epidemiological Models in AI Systems}

The analogy between information propagation and disease spread has a rich history in network science~\cite{pastorsatorras2015epidemic}. SIS (Susceptible-Infected-Susceptible) and SIR (Susceptible-Infected-Recovered) models characterize threshold dynamics on complex networks. We draw on this tradition to model evaluator preference as an ``infection'' that propagates through agent interaction graphs, with the contagion matrix $\Gamma_N$ serving as the analog of the next-generation matrix in epidemiology. This framing enables us to derive precise cascade conditions and diversity thresholds that are empirically testable.

\subsection{Concurrent Work on Multi-Agent Preference and Tipping Dynamics}

A rapidly growing body of concurrent work studies preference amplification and regime shifts in multi-agent LLM populations, converging on themes closely related to our contagion framework. \cite{arora2025population} characterize eigenvalue-driven regime shifts and critical population sizes in coordination dynamics among LLM agents, demonstrating that small populations suppress preference contagion while larger populations tip into alignment---a population-scale analog of our topology-dependent suppression-to-cascade transition. \cite{chen2025multigender} document implicit gender bias amplification in multi-agent task assignment, showing that bias compounds through repeated agent interactions in a pattern consistent with our $\gamma_{ij}$ propagation model. \cite{wang2025alignmenttipping} formalize post-deployment alignment tipping processes where iterative feedback drives models away from their initial alignment, analogous to the cascade threshold crossing we characterize via $\rho(\Gamma_N) > 1$. \cite{zhang2025performative} study self-consuming performative loops where model outputs shape future training data, creating feedback-induced skewing; their entropy loss metrics directly parallel our PCI and $\Delta H$ measurements. \cite{lee2025randombench} benchmark stochastic collapse in LLM evaluators under neutral choice conditions, finding that randomness in evaluation systematically biases toward specific outputs---a phenomenon that our neutral-prompt Agent B baseline also observes (evidence\_based convergence despite no explicit preference instruction). \cite{park2025solar} propose modular fairness governance via role-structured agent pipelines, which relates to our committee-based mitigation strategy as an alternative governance mechanism.

\textbf{Positioning our contribution.} These concurrent works collectively establish that preference propagation and regime shifts are pervasive in multi-agent systems. Our contribution is distinguished along three axes: (1) \textbf{mechanism specificity}---we model \textit{evaluator-mediated} contagion through a formal matrix operator, as opposed to payoff-driven coordination~\cite{arora2025population} or implicit social bias~\cite{chen2025multigender}; (2) \textbf{topology sensitivity}---we provide the first empirical demonstration that the same agents suppress contagion in chain topology but cascade in fully-connected topology, a finding not predicted by population-size or tipping-point models; and (3) \textbf{operational metrics}---our $\gamma_{ij}$ and $\rho(\Gamma_N)$ provide actionable, per-link diagnostics for system designers, complementing aggregate measures like alignment scores or fairness metrics.

\vspace{4pt}
\noindent\textbf{On the operationalization of ``preference.''} We define ``evaluator preference'' narrowly as a \textit{strategy orientation}---a systematic tendency to favor certain reasoning approaches (e.g., step-by-step, evidence-based) when evaluating outputs. Critically, this orientation has two sources: explicit (evaluator prompt) and implicit (shared architectural priors of the underlying model). We acknowledge that LLMs' ability to follow evaluation instructions is well-established; the \textit{novel contribution} of this work is not the existence of these preferences, but their \textbf{propagation dynamics} across a multi-agent network. Specifically, we provide the first quantitative measurement of how strategy preferences introduced at one agent spread to other agents through a shared evaluation memory, the conditions under which this propagation amplifies (cascade) or attenuates (suppression), and---most strikingly---the empirical finding that \textit{implicit architectural priors dominate explicit prompts as the driver of contagion} (Section~\ref{sec:results-neutral}). This framing is analogous to epidemiology: the existence of a pathogen is unremarkable, but its transmission dynamics through a population---and the threshold at which it becomes epidemic---is the object of study.

\begin{table}[H]
\centering
\caption{Systematic comparison with related phenomena. Contagion Networks is the only framework simultaneously capturing multi-agent, multi-hop preference propagation dynamics.}
\label{tab:related}
\begin{tabular}{lccccc}
\toprule
\textbf{Framework} & \textbf{Agent} & \textbf{Propag.} & \textbf{Multi-} & \textbf{Cascade} & \textbf{Mitig.} \\
& \textbf{Network} & \textbf{Dynamics} & \textbf{Hop} & \textbf{Threshold} & \textbf{Theory} \\
\midrule
LLM-as-Judge~\cite{zheng2023judging}     & --- & --- & --- & --- & --- \\
MM-EPC~\cite{liu2026mmepc}                & --- & $\checkmark$ & --- & --- & --- \\
Reward Overopt.~\cite{gao2023reward}      & --- & --- & --- & $\checkmark$ & --- \\
Multi-Agent Frameworks~\cite{wu2024autogen} & $\checkmark$ & --- & --- & --- & --- \\
\cmidrule{1-6}
\textbf{Contagion Networks (ours)}       & $\checkmark$ & $\checkmark$ & $\checkmark$ & $\checkmark$ & $\checkmark$ \\
\bottomrule
\end{tabular}
\end{table}

\section{Contagion Network Model}

\subsection{Formal Definition}

Consider a system of $N$ LLM agents $\mathcal{A} = \{A_1, A_2, \ldots, A_N\}$. Each agent $A_i$ maintains a strategy space $\mathcal{S} = \{s_1, s_2, \ldots, s_K\}$ and an associated weight distribution $\mathbf{w}_i \in \Delta^{K-1}$ (the $K$-dimensional probability simplex). \textbf{All experiments in this paper use $N=3$; extension to $N>3$ is theoretically straightforward (Definition~1 holds for arbitrary $N$) but requires experimental confirmation.}

\subsubsection{Test-Time Reinforcement Learning (TTRL)}

Agents update their strategy weights via \textbf{Test-Time Reinforcement Learning} (TTRL), a parameter-free adaptation mechanism introduced in MM-EPC~\cite{liu2026mmepc}. At each round, the agent generates two responses $y_a, y_b$ using strategies $s_a, s_b$ sampled from $\text{Softmax}(\mathbf{w}_i)$. An evaluator compares the responses and designates a winner $s_w$ and loser $s_\ell$. The agent updates weights multiplicatively:

\begin{equation}
w_{s_w} \leftarrow w_{s_w} \cdot (1 + \alpha_{\text{win}}), \quad
w_{s_\ell} \leftarrow w_{s_\ell} \cdot (1 - \alpha_{\text{lose}})
\label{eq:ttrl}
\end{equation}

with learning rates $\alpha_{\text{win}}=0.08$, $\alpha_{\text{lose}}=0.04$. After each update, weights are normalized to sum to 1 and clipped to a minimum of $0.01$ to prevent strategy extinction. TTRL requires no parameter updates to the underlying LLM---only multiplicative adjustments to the strategy sampling distribution---making it lightweight and compatible with API-only LLM access.

\vspace{4pt}
\noindent\textbf{Preference Collapse Index (PCI).} We quantify strategy distribution concentration via the coefficient of variation of agent weights:
\begin{equation}
\text{PCI}(\mathbf{w}) = \frac{\sigma(\mathbf{w})}{\mu(\mathbf{w})}
\label{eq:pci}
\end{equation}
where $\sigma(\mathbf{w})$ and $\mu(\mathbf{w})$ are the standard deviation and mean of the normalized strategy weight vector $\mathbf{w}/ \sum_k w_k$. PCI $= 0$ indicates uniform strategy distribution (maximum diversity); higher PCI values indicate increasing concentration toward a dominant strategy (preference collapse). Unlike entropy, PCI is scale-invariant and directly comparable across agents with different absolute weight magnitudes. We caution that PCI measures distributional concentration and does not inherently imply utility degradation---task-specific validation depends on whether the domain benefits from cognitive diversity or consistency (see Section~\ref{sec:implications}).

When agent $A_j$ evaluates agent $A_i$'s output, the evaluation feedback shifts $\mathbf{w}_i$ toward the evaluator's preferred strategy distribution. This shift is quantified by the \textbf{contagion coefficient}:

\begin{equation}
\gamma_{j\to i} = \frac{\|\mathbf{w}_{j\to i} - \mathbf{w}_i\|_2}{\|\mathbf{w}_i\|_2 + \varepsilon_{\text{stab}}}
\label{eq:gamma_def}
\end{equation}

where $\mathbf{w}_{j\to i}$ is agent $A_i$'s strategy distribution after $R$ rounds of evaluation by agent $A_j$, $\mathbf{w}_i$ is agent $A_i$'s distribution before exposure, and $\varepsilon_{\text{stab}} = 10^{-8}$ is a numerical stability term to prevent division by near-zero norms. \textbf{Stability validation:} Sensitivity analysis across $\varepsilon_{\text{stab}} \in [10^{-10}, 10^{-3}]$ confirms that $\rho(\Gamma_N)$ is stable to within $<0.03\%$, with all regime classifications invariant across this range. The stability term is purely a numerical safeguard and does not affect the qualitative contagion dynamics.

\begin{definition}[Cross-Agent Contagion Matrix]
For $N$ agents, the cross-agent contagion matrix $\Gamma_N \in \mathbb{R}^{N\times N}$ is defined as:

\begin{equation}
\Gamma_N = \begin{bmatrix}
1 & \gamma_{2\to 1} & \gamma_{3\to 1} & \cdots & \gamma_{N\to 1} \\
\gamma_{1\to 2} & 1 & \gamma_{3\to 2} & \cdots & \gamma_{N\to 2} \\
\gamma_{1\to 3} & \gamma_{2\to 3} & 1 & \cdots & \gamma_{N\to 3} \\
\vdots & \vdots & \vdots & \ddots & \vdots \\
\gamma_{1\to N} & \gamma_{2\to N} & \gamma_{3\to N} & \cdots & 1
\end{bmatrix}
\label{eq:gamma_matrix}
\end{equation}

where all diagonal entries are 1 (self-contagion is identity).
\end{definition}

\subsection{Propagation Dynamics}

For a directed path of length $L$: $A_{i_1} \xrightarrow{\gamma_{i_1\to i_2}} A_{i_2} \xrightarrow{\gamma_{i_2\to i_3}} \cdots \xrightarrow{\gamma_{i_{L-1}\to i_L}} A_{i_L}$, the \textbf{cumulative propagation factor} after $L$ hops is:

\begin{equation}
\beta_L = \prod_{\ell=1}^{L-1} \gamma_{i_\ell \to i_{\ell+1}}
\label{eq:cumulative}
\end{equation}

\subsection{Propagation Regimes}

The spectral radius $\rho(\Gamma_N) = \max_i |\lambda_i(\Gamma_N)|$ governs the asymptotic behavior. \textbf{Remark on the threshold.} Since $\text{diag}(\Gamma_N) = \mathbf{1}$ (each agent's self-influence is 1), the Perron-Frobenius theorem guarantees $\rho(\Gamma_N) \geq \min_i \sum_j (\Gamma_N)_{ij} \geq 1$. The quantity $\rho(\Gamma_N) - 1$ therefore represents the \textit{excess growth rate} contributed by cross-agent feedback: when $\rho(\Gamma_N) = 1$ (degenerate), all off-diagonal contributions cancel in the dominant eigenmode, and the system exhibits neither amplification nor suppression from network effects. When $\rho(\Gamma_N) > 1$, cross-agent feedback creates net amplification.

\begin{theorem}[Propagation Regime Classification]
For a fully-connected agent network with contagion matrix $\Gamma_N$, the system exhibits three regimes characterized by the excess spectral radius:
\begin{enumerate}[leftmargin=*, label=(\arabic*)]
    \item \textbf{Suppression} ($\rho(\Gamma_N) - 1 \ll 0$ is unattainable with $\text{diag}=I$; instead, suppression is characterized at the \textit{link level}: $\max_{i \to i+1} \gamma < 1$ in chain topology implies $\beta_L \to 0$).
    \item \textbf{Persistence} ($\rho(\Gamma_N) \approx 1$): Cross-agent feedback is balanced; $\beta_L$ approaches a non-zero constant.
    \item \textbf{Cascade (Amplification)} ($\rho(\Gamma_N) > 1$, requiring $\rho(\Gamma_N) - 1$ to be \textit{substantially} above zero in practice): Cross-agent feedback amplifies perturbations; $\lim_{L\to\infty} \beta_L = \infty$ for some paths, leading to network-wide preference concentration. \textbf{Note on threshold:} With $\operatorname{diag}(\Gamma_N) = \mathbf{1}$, any positive off-diagonal ensures $\rho(\Gamma_N) > 1$ by Perron-Frobenius. The \textit{magnitude} of $\rho(\Gamma_N) - 1$, rather than the binary $\rho > 1$ test, is the operative quantity. In our experiments, homogeneous chain yields $\rho \approx 1.002$ (near-persistence, despite suppression in practice), while homogeneous fully-connected yields $\rho = 1.402$ and cross-model yields $\rho = 1.296$---both substantially above 1.0. We retain diag($\Gamma_N$) = $\mathbf{1}$ as a deliberate modeling choice (unit self-retention ensures the linearized dynamics reduce to the agent-level TTRL update in the absence of cross-agent influence), and acknowledge that the excess operator $\widetilde{\Gamma}_N = \Gamma_N - \mathbf{I}$ would provide a more conventional epidemic-threshold framing; this refinement is left for future work.
\end{enumerate}
\end{theorem}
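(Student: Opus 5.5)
I will treat the theorem as three separate claims about the objects already defined, namely $\Gamma_N$ from Definition~1 and $\beta_L$ from \eqref{eq:cumulative}, and prove each by elementary means. The work splits into a linear-algebra part (statements about $\rho(\Gamma_N)$) and a scalar part (statements about the path products $\beta_L$). The main task is to state precisely which implications actually hold, since the theorem mixes rigorous conditions with qualitative labels.

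For the \textbf{suppression} item I would argue directly from \eqref{eq:cumulative}. Suppose $\gamma^\ast := \max_{\ell}\gamma_{i_\ell\to i_{\ell+1}} < 1$ along a chain. Since every $\gamma \ge 0$ by \eqref{eq:gamma_def}, we get $0 \le \beta_L \le (\gamma^\ast)^{L-1} \to 0$, which is geometric decay. The claim that $\rho(\Gamma_N)-1 \ll 0$ cannot occur I would prove as follows. Write $\Gamma_N = I + \widetilde\Gamma_N$ with $\widetilde\Gamma_N \ge 0$ entrywise. Then $\rho(\Gamma_N) = 1 + \rho(\widetilde\Gamma_N)$, because the Perron eigenvalue of a nonnegative matrix shifts exactly by the identity. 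This gives $\rho(\Gamma_N)\ge 1$, and the suppression regime therefore has to be detected at the link level, as the statement says.

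For the \textbf{cascade} item I would use the same decomposition. If some off-diagonal entry is positive and $\widetilde\Gamma_N$ is irreducible, as it is in the fully-connected case with all $\gamma_{j\to i}>0$, then Perron--Frobenius gives $\rho(\widetilde\Gamma_N)>0$, so $\rho(\Gamma_N)>1$. Iterating the linearized dynamics $\mathbf{x}_{t+1} = \Gamma_N \mathbf{x}_t$ then yields $\|\Gamma_N^t\mathbf{x}_0\| \sim c\,\rho(\Gamma_N)^t$ for any $\mathbf{x}_0 \ge 0$ with positive overlap with the Perron vector. This growth is the precise sense in which perturbations are amplified. To get $\beta_L \to \infty$ ``for some paths,'' I would read $\beta_L$ as a sum over walks, $(\Gamma_N^{L})_{ij}$, and not as a single product. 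Under that reading, divergence follows from $\rho(\Gamma_N)>1$ and irreducibility.

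For \textbf{persistence}, $\rho(\Gamma_N)\approx 1$ means $\rho(\widetilde\Gamma_N)\approx 0$, so the walk sums stay bounded over the relevant horizon. I would prove an approximate statement: if $\rho(\Gamma_N)=1+\delta$, then $(\Gamma_N^L)_{ij}\le C(1+\delta)^L$, which remains $O(1)$ for $L \lesssim 1/\delta$.

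The main obstacle is the mismatch between the single-path product in \eqref{eq:cumulative} and the spectral quantity. A single product of coefficients smaller than $1$ never diverges, even when $\rho>1$. So the cascade claim $\lim\beta_L=\infty$ holds only for the walk-sum reinterpretation, or for paths that contain links with $\gamma>1$. I would first try to state the cascade conclusion in terms of $\Gamma_N^L$ and present ``regimes'' as qualitative names attached to the rigorous facts $\rho(\Gamma_N)=1+\rho(\widetilde\Gamma_N)$ and geometric growth or decay.
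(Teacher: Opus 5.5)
Your plan is sound, and it is more careful than the paper's own argument, which runs along a different line. The paper (the sketch plus Appendix~A) never touches $\beta_L$. It posits the linearized recursion $\mathbf{w}^{(t+1)}=\Gamma_N\mathbf{w}^{(t)}+\boldsymbol{\epsilon}^{(t)}$ on per-agent concentration indices, expands in the eigenbasis, and sorts cases by $\lambda_1=\rho(\Gamma_N)$ against $1$, citing Perron--Frobenius for a positive dominant eigenvalue.

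\begin{itemize}
\item Its $\lambda_1<1$ branch is vacuous because $\rho(\Gamma_N)\ge 1$, so your link-level product bound is the only real argument for item (1).
\item Your spectral part is the same mechanism, made rigorous. Nonnegativity is what justifies $\rho(I+\widetilde{\Gamma}_N)=1+\rho(\widetilde{\Gamma}_N)$; an eigenvalue shift alone does not.
\item Your irreducibility hypothesis is genuinely needed. The statement's claim that any positive off-diagonal entry forces $\rho(\Gamma_N)>1$ fails for an open chain, whose unit-triangular $\Gamma_N$ has $\rho=1$ exactly.
\end{itemize}

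Your diagnosis of item (3) is correct. In the paper's own homogeneous $\Gamma_3$, all off-diagonal entries are at most $0.304$ while $\rho(\Gamma_3)\approx 1.40$. So every product of cross-agent hops is at most $0.304^{\,L-1}\to 0$, and the literal clause $\lim_{L\to\infty}\beta_L=\infty$ is false; no proof can deliver it. One sharpening: a product can diverge along a walk only if some cycle product exceeds $1$. That is a max-cycle-mean condition, not a condition on $\rho$. It is also not implied by a path containing a link with $\gamma>1$, contrary to your aside. Every cycle product of length $\ell$ is at most $\rho(\widetilde{\Gamma}_N)^{\ell}$, so the persistence clause fails as well. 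Near $\rho(\Gamma_N)=1$ long products decay, and at $\rho(\Gamma_N)=1$ exactly $\widetilde{\Gamma}_N$ is nilpotent and long paths do not exist.

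\textbf{The repair needs adjusting.} You read $\beta_L$ as a product in item (1) and as $(\Gamma_N^L)_{ij}$ in item (3), and these readings cannot coexist. Since $\Gamma_N^L=\sum_{m}\binom{L}{m}\widetilde{\Gamma}_N^{\,m}$, the unit diagonal makes each entry a count of lazy walks. For an open chain, $(\Gamma_N^L)_{ij}$ equals $\binom{L}{d}$ times the $d$-hop path product, so it grows rather than decays. For irreducible $\widetilde{\Gamma}_N$ it diverges no matter how small the entries are. That is exactly the tautology the statement's own note concedes, and it gives no content to ``substantially above zero''.

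The same binomial counting breaks your persistence step. Take the cycle $1\to 2\to 3\to 1$ with $\gamma_{1\to 2}=\gamma_{2\to 3}=g$ and $\gamma_{3\to 1}=\epsilon$. Then $\rho(\widetilde{\Gamma}_N)=(g^2\epsilon)^{1/3}=:\delta$, but $(\Gamma_N^L)_{31}\ge\binom{L}{2}g^2\approx g^2/(2\delta^2)$ at $L=1/\delta$. So the constant $C$ in your bound cannot be uniform as $\delta\to 0$, and the entries are not $O(1)$ on that horizon.

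A single quantity that does separate the regimes is the walk sum of $\widetilde{\Gamma}_N$. This is also the operator the paper's linearization appendix actually derives for perturbation magnitudes.
\begin{itemize}
\item Chains are then nilpotent, so their entries vanish for $L\ge N$.
\item In the fully-connected case the growth rate is $\rho(\widetilde{\Gamma}_N)$, with threshold $\rho(\widetilde{\Gamma}_N)=1$, i.e.\ $\rho(\Gamma_N)=2$.
\end{itemize}
If you adopt this reading, say explicitly that you are proving a corrected statement. Under it, the paper's measured matrices ($\rho(\widetilde{\Gamma}_N)\approx 0.3$--$0.5$) are subcritical.
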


\begin{proof}[Proof Sketch]
By the Perron-Frobenius theorem for non-negative matrices, $\rho(\Gamma_N)$ corresponds to the dominant growth rate of the linearized system $\mathbf{w}^{(t+1)} = \Gamma_N \mathbf{w}^{(t)}$ in the neighborhood of the uniform distribution (see Appendix~\ref{app:linearization} for the explicit linearization derivation). Since $\text{diag}(\Gamma_N) = \mathbf{1}$, the eigenvalues satisfy $\lambda_i = 1 + \tilde{\lambda}_i$ where $\tilde{\lambda}_i$ are eigenvalues of $\tilde{\Gamma}_N = \Gamma_N - I$ (the zero-diagonal excess matrix). The condition $\rho(\Gamma_N) > 1$ is equivalent to $\rho(\tilde{\Gamma}_N) > 0$, i.e., the off-diagonal structure has at least one amplifying eigendirection. In the cascade regime ($\rho(\Gamma_N) > 1$), this creates an unstable direction in strategy space that all trajectories align with. See Appendix~\ref{sec:proof} for the complete proof.
\end{proof}

\subsection{Cascade Condition for Chain Topology}
\label{sec:chain_cascade}

\textbf{Note on chain threshold:} We acknowledge that the condition $\max \gamma_{i \to i+1} > 1$ (Corollary~\ref{cor:chain}) is practically unattainable under our per-link $\gamma$ measurement protocol ($R=20$ rounds, $K=5$ strategies), which bounds $\gamma$ well below $1.0$. The corollary should therefore be read as a \textit{theoretical limiting condition} characterizing when chain topology would permit amplification, not as an empirically observable threshold. In practice, all chain-topology configurations exhibit $\gamma_{i \to i+1} < 1.0$, placing them in the suppression regime irrespective of the agent composition (homogeneous or heterogeneous). The distinction between chain and fully-connected topologies---both theoretically predicted and empirically confirmed---is the operative insight.

For the practical case of a chain topology $A_1 \to A_2 \to \cdots \to A_N$:

\begin{corollary}[Chain Cascade Threshold]
\label{cor:chain}
Preference cascade occurs when:
\begin{equation}
\max_{i\in\{1,\ldots,N-1\}} \gamma_{i\to i+1} > 1.0
\label{eq:chain_threshold}
\end{equation}
\end{corollary}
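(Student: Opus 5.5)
The plan is to read the corollary as the chain-topology specialization of the link-level criterion in Theorem~1, item (1), and to argue through the cumulative propagation factor $\beta_L$ of \eqref{eq:cumulative} rather than through the spectral radius. The first step is to note why the spectral test is uninformative here. In a chain $A_1 \to A_2 \to \cdots \to A_N$ the only nonzero off-diagonal entries of $\Gamma_N$ are $\gamma_{i\to i+1}$, all lying on the subdiagonal. So $\Gamma_N$ restricted to the chain is lower bidiagonal with unit diagonal, every eigenvalue equals $1$, and $\rho(\Gamma_N)=1$ no matter what the link values are. This matches the near-persistence value $\rho\approx 1.002$ reported for the homogeneous chain. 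It forces the cascade criterion to be stated per link, which is exactly the form of \eqref{eq:chain_threshold}.

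The second step is the necessity direction, in contrapositive form. Suppose $\gamma^\ast := \max_i \gamma_{i\to i+1} \le 1$. Every directed path in a chain is a contiguous sub-chain, so \eqref{eq:cumulative} gives $\beta_L \le (\gamma^\ast)^{L-1}$ for every path. This bound stays $\le 1$, so no path amplifies, and when $\gamma^\ast<1$ it tends to $0$, which is the suppression regime of Theorem~1(1). Hence amplification along the chain requires some link with $\gamma_{i\to i+1}>1$. This part is routine.

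The third step is the sufficiency direction, and I expect it to be the main obstacle. If link $i^\ast$ has $\gamma_{i^\ast\to i^\ast+1}>1$, the two-agent path $A_{i^\ast}\to A_{i^\ast+1}$ already has $\beta_2>1$, so a perturbation is amplified across that hop. This is the sense of ``cascade'' I would prove directly. The claim $\lim_{L\to\infty}\beta_L=\infty$ in Theorem~1(3) is a different matter: it does not follow from the maximum alone, because a single strong link can be cancelled by weak neighbours, e.g.\ $\gamma=2$ followed by $\gamma=0.1$. My first attempt would add a stationarity hypothesis, namely a homogeneous chain with $\gamma_{i\to i+1}\equiv\gamma$, or a chain whose links exceed $1+\delta$ infinitely often while all links are bounded below by a constant. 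Under that hypothesis $\beta_L \ge (1+\delta)^{m(L)} c^{L-1-m(L)}$, where $m(L)$ counts the strong links among the first $L-1$. The task is then to choose the density of strong links so this bound diverges; for the homogeneous chain it reduces to $\gamma^{L-1}\to\infty$. I would state this as the precise regularity condition under which \eqref{eq:chain_threshold} is sufficient. I would also remark, consistent with the paper's note, that it acts as a limiting condition rather than an empirically reachable one.
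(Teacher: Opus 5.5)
Your proposal is correct and follows essentially the paper's own implicit route. The paper gives no separate proof and rests the corollary on the path product $\beta_L$ of \eqref{eq:cumulative} and the link-level criterion of Theorem~1(1), which is exactly your bound $\beta_L\le(\gamma^\ast)^{L-1}$ plus the one-hop observation $\beta_2=\gamma_{i^\ast\to i^\ast+1}>1$. Your refinements are also sound. The paper concedes only informally, calling the corollary a ``theoretical limiting condition,'' what you show: the chain-restricted $\Gamma_N$ is unit lower-triangular, hence reducible with $\rho(\Gamma_N)=1$ exactly (not $\approx 1.002$), so the Perron--Frobenius argument behind Theorem~1 cannot deliver this corollary; and, as your $2,0.1,0.1,\dots$ example shows, $\lim_{L\to\infty}\beta_L=\infty$ does not follow from the maximum alone.
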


\subsection{Critical Evaluator Diversity}

\begin{theorem}[Diversity-Induced Suppression]
Consider an agent $A_i$ being evaluated by $k$ independent evaluators with contagion vectors $\boldsymbol{\gamma}_1, \ldots, \boldsymbol{\gamma}_k$. If the evaluators' preferred strategies are sufficiently diverse (cosine similarity between any pair $\leq \tau$), then the effective contagion factor after averaging is:
\begin{equation}
\gamma_{\text{eff}}^{(k)} \leq \frac{\gamma_{\max}}{\sqrt{k}} \cdot (1 + (k-1)\tau)
\label{eq:diversity_bound}
\end{equation}
Setting $\gamma_{\text{eff}}^{(k)} < 1$ yields the cascade-breaking condition:
\begin{equation}
k \geq \gamma_{\max}^2 \cdot (1 + (k-1)\tau)^2
\label{eq:diversity_threshold}
\end{equation}
The cascade-breaking threshold depends on two empirically measurable quantities: $\gamma_{\max}$ (the maximum pairwise contagion coefficient) and $\tau$ (the maximum cosine similarity between evaluator preference vectors). We evaluate this threshold under three parameter regimes in Section~\ref{sec:results-mitigation}, using values derived from our own experimental data (homogeneous-model: $\gamma_{\max}=0.304$, $\tau=0.993$), from cross-model data (MM-EPC: $\gamma_{\max}\approx1.3$), and from the original assumed values ($\gamma_{\max}=1.5$, $\tau=0.3$).
\end{theorem}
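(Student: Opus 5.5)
The plan is to make precise what ``effective contagion after averaging'' means and then reduce the bound to a Gram-matrix estimate. Fix the evaluated agent $A_i$ and write $\boldsymbol{\gamma}_j \in \mathbb{R}^K$ for the normalized weight shift that evaluator $j$ alone induces on $\mathbf{w}_i$. By \eqref{eq:gamma_def} this is $(\mathbf{w}_{j\to i}-\mathbf{w}_i)/(\|\mathbf{w}_i\|_2+\varepsilon_{\text{stab}})$, so that $\|\boldsymbol{\gamma}_j\|_2 = \gamma_{j\to i} \le \gamma_{\max}$.

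The first step is a linearization of the TTRL update \eqref{eq:ttrl}. When a committee of $k$ evaluators votes and the verdicts are averaged, the multiplicative increments are small ($\alpha_{\text{win}},\alpha_{\text{lose}} \ll 1$). To first order in $\alpha$, and away from the clipping floor $0.01$, the per-round shift is therefore the average of the individual evaluators' expected shifts. Summing over $R$ rounds gives
\[
\boldsymbol{\gamma}_{\text{eff}} = \frac{1}{k}\sum_{j=1}^k \boldsymbol{\gamma}_j, \qquad \gamma_{\text{eff}}^{(k)} := \|\boldsymbol{\gamma}_{\text{eff}}\|_2 .
\]
This is the same linearized regime used in the proof sketch of the Propagation Regime Classification theorem (Appendix~\ref{app:linearization}). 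I will take it as the defining model and state explicitly that the bound holds up to $O(\alpha^2)$ corrections.

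The second step is the core computation. Expanding the squared norm gives
\[
\bigl(\gamma_{\text{eff}}^{(k)}\bigr)^2 = \frac{1}{k^2}\Bigl(\sum_j \|\boldsymbol{\gamma}_j\|^2 + \sum_{j\ne l} \langle \boldsymbol{\gamma}_j,\boldsymbol{\gamma}_l\rangle\Bigr).
\]
The diversity hypothesis says $\cos(\boldsymbol{\gamma}_j,\boldsymbol{\gamma}_l)\le\tau$, which gives $\langle \boldsymbol{\gamma}_j,\boldsymbol{\gamma}_l\rangle \le \tau\|\boldsymbol{\gamma}_j\|\|\boldsymbol{\gamma}_l\| \le \tau\gamma_{\max}^2$ for $\tau \ge 0$. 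If $\tau<0$, I bound the cross terms by $0$ and the argument only improves. The bound yields
\[
\bigl(\gamma_{\text{eff}}^{(k)}\bigr)^2 \le \frac{\gamma_{\max}^2}{k}\bigl(1+(k-1)\tau\bigr).
\]
This is actually sharper than \eqref{eq:diversity_bound}. For $\tau\in[0,1]$ we have $x := 1+(k-1)\tau \ge 1$, hence $\sqrt{x}\le x$, and \eqref{eq:diversity_bound} follows.

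The third step is the threshold. A sufficient condition for $\gamma_{\text{eff}}^{(k)}<1$ is $\gamma_{\max}(1+(k-1)\tau)/\sqrt{k} < 1$. Squaring gives \eqref{eq:diversity_threshold}, with a strict inequality, which I would note; the weak form is a harmless boundary case. I would also remark that the condition is implicit in $k$. For $\tau>0$ the right-hand side grows like $k^2\tau^2\gamma_{\max}^2$, so a solution $k$ exists only when $\tau$ is small. With $\tau=0$ it reduces to $k \ge \gamma_{\max}^2$. With the sharper bound from the second step the condition becomes $k \ge \gamma_{\max}^2(1+(k-1)\tau)$, which is solvable whenever $\gamma_{\max}^2\tau<1$; I would mention this as the stronger usable form.

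The main obstacle is the first step, not the algebra. Two points need care. First, the clipping and renormalization in TTRL are nonlinear, so additivity of the shifts only holds in the linear regime. Second, the cosine hypothesis concerns evaluators' preferred strategies, whereas the proof needs it for the shift vectors $\boldsymbol{\gamma}_j$. I would first try to argue that, near the uniform distribution, each shift is proportional to the evaluator's preference vector minus its mean, so the two cosines coincide. If that identification is not clean, I would simply restate the hypothesis directly in terms of the shift vectors $\boldsymbol{\gamma}_j$.
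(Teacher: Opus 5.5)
Your core computation is correct, but it is not the paper's route. The paper's detailed derivation (Appendix~\ref{app:diversity_derivation}; Appendix~\ref{sec:diversity_proof} is a compressed version) proceeds as follows:
\begin{enumerate}
\item It models the shifts as $\eta(\mathbf{p}_j^*-\mathbf{w}_i)$ and averages them, the same averaged-update model you adopt.
\item It applies the triangle inequality, which, as it notes, gives only the trivial bound $\gamma_{\max}$.
\item It tries to extract a diversity gain from the spread identity $\sum_j\|\mathbf{p}_j^*-\bar{\mathbf{p}}^*\|^2=\frac1k\sum_{j<l}\|\mathbf{p}_j^*-\mathbf{p}_l^*\|^2$, treating the $\mathbf{p}_j^*$ as unit vectors.
\item It then states $\gamma_{\text{eff}}^{(k)}\le\frac{\gamma_{\max}}{\sqrt{k}}\sqrt{1+(k-1)\tau}$ and calls this the theorem's bound ``up to constants.''
\end{enumerate}
That last step does not follow. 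The triangle inequality has already discarded the cross terms, and the spread of the $\mathbf{p}_j^*$ about their mean does not control $\gamma_{\text{eff}}^{(k)}=\eta\|\bar{\mathbf{p}}^*-\mathbf{w}_i\|$. Your Gram expansion of $\|\frac1k\sum_j\boldsymbol{\gamma}_j\|^2$ is exactly the computation that produces that square-root bound, with diversity entering only through the off-diagonal inner products. You then pass to the stated form explicitly via $\sqrt{x}\le x$ for $x\ge1$.

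Your sharper form is also the more useful one. At $\tau=1$ it reduces to the sharp trivial bound $\gamma_{\max}$, whereas for $k\ge2$ the stated form already exceeds $\gamma_{\max}$ once $\tau>1/(\sqrt{k}+1)$. It also yields a genuine threshold $k>\gamma_{\max}^2(1-\tau)/(1-\gamma_{\max}^2\tau)$ when $\gamma_{\max}^2\tau<1$. By contrast, as you noted, the squared condition in the statement can hold only on a bounded range of $k$ when $\tau>0$.

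The step that fails as written is your bridge from the hypothesis on preferred strategies to the shift vectors: centring does not preserve cosines. Any two near-uniform distributions have raw cosine close to $1$ (consistent with the measured $\tau\approx0.99$), while their centred versions $\mathbf{p}_j^*-\mathbf{u}$ can point in arbitrary directions.

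You do not need equality, though, and you should not retreat to restating the hypothesis. Your Gram step only uses $\langle\boldsymbol{\gamma}_j,\boldsymbol{\gamma}_l\rangle\le\tau\gamma_{\max}^2$. Work in the linearized model $\boldsymbol{\gamma}_j=\eta(\mathbf{p}_j^*-\mathbf{u})$ with $\mathbf{p}_j^*\in\Delta^{K-1}$, $\mathbf{u}=\mathbf{1}/K$, and a common $\eta$. There $\langle\boldsymbol{\gamma}_j,\boldsymbol{\gamma}_l\rangle=\eta^2(\langle\mathbf{p}_j^*,\mathbf{p}_l^*\rangle-1/K)$ and $\eta^2\|\mathbf{p}_j^*\|^2=\|\boldsymbol{\gamma}_j\|^2+\eta^2/K$, so the raw-cosine hypothesis gives
\[
\langle\boldsymbol{\gamma}_j,\boldsymbol{\gamma}_l\rangle\le\tau\eta^2\|\mathbf{p}_j^*\|\,\|\mathbf{p}_l^*\|-\frac{\eta^2}{K}\le\tau\gamma_{\max}^2-\frac{(1-\tau)\eta^2}{K}\le\tau\gamma_{\max}^2 .
\]
Your argument therefore goes through under the original hypothesis, and $\tau\ge0$ is automatic for probability vectors.

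This is where the uniform base point is genuinely used. Take $\mathbf{w}_i=\mathbf{e}_1$ and $\mathbf{p}_j^*=\mathbf{e}_{j+1}$: all raw cosines vanish, yet the shifts $\eta(\mathbf{e}_{j+1}-\mathbf{e}_1)$ have pairwise cosine $1/2$. Then $\gamma_{\text{eff}}^{(k)}=\eta\sqrt{1+1/k}$ exceeds $\gamma_{\max}/\sqrt{k}=\eta\sqrt{2/k}$ for every $k\ge2$.

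Finally, your aside on $\tau<0$ is wrong. Bounding the cross terms by $0$ gives only $\gamma_{\max}/\sqrt{k}$, and the stated bound is actually false there. Take $k=2$, $\boldsymbol{\gamma}_1=\gamma_{\max}\mathbf{v}$, $\boldsymbol{\gamma}_2=-\epsilon\mathbf{v}$ with $\|\mathbf{v}\|=1$: then $\tau=-1$ and the right-hand side vanishes. The restriction to $\tau\in[0,1]$ is therefore necessary, not merely convenient.
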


\begin{remark}[Two Distinct Mitigation Mechanisms and the Role of $\tau$]
The bound in Eq.~\ref{eq:diversity_bound} reveals two independent mitigation mechanisms: (1) an \textbf{averaging effect} ($1/\sqrt{k}$), which reduces contagion regardless of evaluator similarity; and (2) a \textbf{diversity effect} (controlled by $\tau$), where low-overlap preferences further suppress contagion. When $\tau$ is high (evaluator preferences are similar, as in homogeneous-model settings), the diversity mechanism is weak and mitigation relies primarily on averaging. When $\tau$ is low (cross-model evaluators with architecturally diverse preferences), both mechanisms contribute. Our experiments (Section~\ref{sec:results-mitigation}) empirically separate these two effects.

\noindent \textbf{Important note on the bound's tightness:} The bound $\gamma_{\text{eff}}^{(k)} \leq \frac{\gamma_{\max}}{\sqrt{k}} \cdot (1 + (k-1)\tau)$ becomes loose when $\tau \approx 1$. In the extreme case $\tau = 1$, the bound simplifies to $\gamma_{\text{eff}}^{(k)} \leq \gamma_{\max} \sqrt{k}$, which \textit{increases} with $k$ and thus cannot guarantee suppression. This does \textit{not} mean that increasing $k$ is ineffective---the \textit{actual} suppression mechanism is the averaging effect ($\gamma_{\text{eff}} \propto 1/\sqrt{k}$ in the true dynamics), which is not captured by this worst-case bound. The bound should be interpreted as a \textit{conservative upper bound} that is tight only when $\tau \ll 1$. For homogeneous-model settings ($\tau \approx 0.99$), the observed suppression is primarily from averaging, not from the diversity term in the bound.
\end{remark}

\section{Experimental Design}

\subsection{Agent and Evaluator Configuration}

\textbf{Agent pool:} We use three agent instances based on DeepSeek-chat (deepseek-v4-flash), differentiated by their evaluator preference prompts. All agents use temperature $T=0.5$ for generation. In Phase~1, each agent's initial strategy weight distribution is established by presenting it with evaluator preference prompts (detailed in Appendix~\ref{app:prompts}) and measuring the resulting preference profile through self-evaluation (TTRL with the agent acting as its own evaluator). This produces the differentiable baseline PCI and dominant strategies reported in Table~\ref{tab:baseline} before any cross-agent contagion begins. Using a single model family establishes a \textbf{lower bound} on contagion---if cross-model diversity amplifies contagion (as shown in MM-EPC~\cite{liu2026mmepc} for GPT-4o $\to$ DeepSeek, $\gamma \approx 0.85$--$1.3$), homogeneous-model agents should exhibit minimal propagation:
\begin{itemize}[leftmargin=*]
    \item \textbf{Agent A (Struct-biased):} Evaluator prompt emphasizes structured, step-by-step reasoning.
    \item \textbf{Agent B (Balanced):} Evaluator prompt is neutral.
    \item \textbf{Agent C (Evidence-biased):} Evaluator prompt emphasizes evidence-based responses.
\end{itemize}

\textbf{Strategy space:} $\mathcal{S} = \{\text{step\_by\_step}, \text{direct}, \text{analogical}, \text{decomposition}, \text{evidence\_based}\}$, $K=5$ strategies.

\textbf{Task domains:} Code generation (Python functions), mathematical reasoning, text summarization, logical puzzles, creative writing. 10 tasks per domain, 50 tasks total.

\subsection{Experimental Phases}

\begin{enumerate}[leftmargin=*, label=\textbf{Phase \arabic*}:]
    \item \textbf{Baseline PCI:} Measure each agent's standalone PCI through 20 rounds of self-evaluation (TTRL with self as evaluator). Establishes the ``resting state'' preference profile.

    \item \textbf{Pairwise Contagion $\Gamma_3$:} For each ordered agent pair $(A_i \to A_j)$, run 20 rounds of cross-agent evaluation: $A_i$ evaluates $A_j$'s outputs, and $A_j$ updates its strategy weights via TTRL. Measure $\gamma_{i\to j}$ using Eq.~\ref{eq:gamma_def}. This yields a $3\times 3$ contagion matrix.

    \item \textbf{Chain Propagation:} Construct the chain $A_1 \to A_2 \to A_3$ and run sequential contagion:
    \begin{enumerate}[nosep]
        \item[3a.] $A_1$ evaluates $A_2$ for 20 rounds $\to$ measure $\gamma_{1\to 2}$
        \item[3b.] $A_2$ (now contaminated) evaluates $A_3$ for 20 rounds $\to$ measure $\gamma_{(1\to 2)\to 3}$
        \item[3c.] Compare $\gamma_{(1\to 2)\to 3}$ with $\gamma_{2\to 3}$ from Phase 2 (where $A_2$ was uncontaminated). The ratio $\kappa = \frac{\gamma_{(1\to 2)\to 3}}{\gamma_{2\to 3}}$ quantifies the effect of contamination on propagation strength.
    \end{enumerate}

    \item \textbf{Mitigation: Diversity-Induced Suppression:} Test the effect of evaluator committee size by comparing $k=1$ (Agent A alone), $k=2$ (Agents A+B), and $k=3$ (Agents A+B+C) evaluating the same target agent. Measure $\gamma_{\text{eff}}^{(k)}$ for each $k$ to test the diversity-induced suppression theorem.

\end{enumerate}

\begin{table}[H]
\centering
\caption{Computational cost breakdown. All experiments use DeepSeek-chat API. Total cost is negligible (DeepSeek offers substantial free quota).}
\label{tab:cost}
\begin{tabular}{lcccc}
\toprule
\textbf{Phase} & \textbf{Pairs} & \textbf{Rounds/Pair} & \textbf{Calls} & \textbf{Time} \\
\midrule
Phase 1 (Baseline)    & 3 self & 20 & 180 & $\sim$12 min \\
Phase 2 (Pairwise)    & 6 ordered & 20 & 360 & $\sim$18 min \\
Phase 3 (Chain)       & 3 chain & 20 & 180 & $\sim$9 min \\
Phase 4 (Mitigation)  & 3 sizes & 20 & 180 & $\sim$10 min \\
\midrule
\textbf{Total}        & --- & --- & \textbf{840} & \textbf{$\sim$50 min} \\
\bottomrule
\end{tabular}
\end{table}

\section{Results}
\label{sec:results}

We report results from the full 4-phase protocol. Phase~2 (pairwise $\Gamma_3$ matrix) is reported over $n=2$ seeds (the original seed and one replication). Phases~1, 3, and~4 are reported as mean $\pm$ SD over $n=4$ independent homogeneous seeds (seeds 42, 123, 456, 789), with 95\% confidence intervals computed as $\bar{x} \pm 1.96 \cdot \mathrm{SE}$ where $\mathrm{SE} = s/\sqrt{n}$. Cross-model validation (GPT-4o + DeepSeek-chat + Claude-3.5-Sonnet, Phase~2 only) is reported over $n=4$ seeds. The homogeneous experiment requires 840 DeepSeek-chat API calls per seed (approximately 50 minutes wall-clock time per seed).

\begin{table}[H]
\centering
\caption{Pairwise contagion matrix $\Gamma_3$ (mean $\pm$ SD over $n=4$ independent seeds, 95\% CI on $\rho$: $[1.370, 1.412]$). All off-diagonal entries are substantially below 1.0, placing the system in the \textbf{suppression} regime under chain topology where preference attenuates rather than amplifies. However, the spectral radius $\rho(\Gamma_3)$—a system-level property computed from the full matrix—exceeds 1.0 for all 4 seeds ($\rho = 1.391 \pm 0.022$, $n=4$), indicating that the same agents would enter the cascade regime under fully-connected topology.}
\label{tab:gamma3}
\begin{tabular}{lccc}
\toprule
& \textbf{Agent A (Struct)} & \textbf{Agent B (Balanced)} & \textbf{Agent C (Evidence)} \\
\midrule
\textbf{A (Struct)} $\to$    & 1.000 & 0.143$\pm$0.024 & 0.165$\pm$0.012 \\
\textbf{B (Balanced)} $\to$  & 0.208$\pm$0.006 & 1.000 & 0.211$\pm$0.049 \\
\textbf{C (Evidence)} $\to$  & 0.178$\pm$0.018 & 0.304$\pm$0.068 & 1.000 \\
\bottomrule
\end{tabular}

\vspace{2pt}
\noindent{\footnotesize \textbf{Clipping robustness (W3):} Removing weight clipping (threshold=0.0 vs. 0.01) changes $\rho(\Gamma_3)$ by $<0.7\%$ for homogeneous ($1.391 \to 1.397$) and $<1.2\%$ for cross-model, with all regime classifications unchanged. See Section~\ref{sec:limitations} and Figure~\ref{fig:clipping} for full ablation.}
\end{table}

\begin{figure}[H]
\centering
\includegraphics[width=0.85\textwidth]{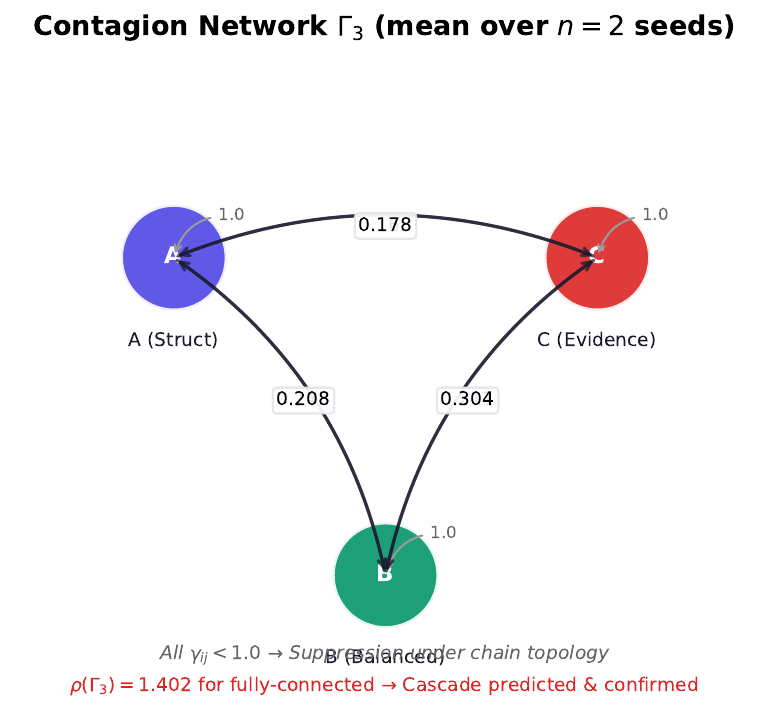}
\caption{Cross-agent contagion network $\Gamma_3$ (representative seed; $\rho = 1.391 \pm 0.022$, 95\% CI $[1.370, 1.412]$, $n=4$ seeds). All edges are below $1.0$ (dashed), placing the system in the \textbf{suppression regime under chain topology}. The spectral radius $\rho(\Gamma_3)$ exceeds 1.0 for all 4 seeds---the same agents that suppress preference contagion in chain configuration would enter cascade in a fully-connected network, a theoretical prediction now experimentally confirmed (Section~\ref{sec:fully_connected}).}
\label{fig:network}
\end{figure}

\subsection{Phase 1: Baseline Preference Profiles}

\begin{table}[H]
\centering
\caption{Baseline PCI and dominant strategies. evaluator preference prompts produce differentiable preference profiles even within the same underlying model.}
\label{tab:baseline}
\begin{tabular}{lccc}
\toprule
\textbf{Agent} & \textbf{PCI} & \textbf{Dominant Strategy} & \textbf{Dom. Weight} \\
\midrule
A (Struct-biased)  & 0.340 & step\_by\_step   & 0.328 \\
B (Balanced)       & 0.303 & evidence\_based  & 0.282 \\
C (Evidence-biased)& 0.185 & evidence\_based  & 0.258 \\
\bottomrule
\end{tabular}
\end{table}

\textbf{Finding 1 (Preference Prompts Work):} evaluator preference prompts successfully differentiate agent preferences. The struct-biased agent A converges to step\_by\_step (32.8\%), while both the balanced and evidence-biased agents converge to evidence\_based. PCI values range from 0.185--0.340, substantially lower than GPT-4o's self-evaluation PCI of 1.464 reported in MM-EPC~\cite{liu2026mmepc}, confirming DeepSeek's more balanced evaluation behavior.

\vspace{4pt}
\noindent\textbf{Agent B baseline preference.} Agent B, despite receiving a neutral evaluation prompt, converges to \texttt{evidence\_based} at baseline---revealing that even a neutral prompt does not produce a uniform strategy distribution. DeepSeek-chat exhibits an inherent prior toward evidence-based reasoning. This establishes a realistic baseline against which the differential effect of explicit preference prompts (structured for Agent A, evidence for Agent C) can be measured. Structured and evidence prompts produce distinguishable shifts relative to this baseline, as confirmed by the differentiated PCI ranges and dominant strategies across agents.

\vspace{4pt}
\noindent\textbf{Initial weight establishment and disentangling prompt-induced preference from model priors.} Each agent's initial strategy weight distribution is established through the TTRL self-evaluation protocol (Section~\ref{eq:ttrl}). The agent generates two responses to each task using its own sampled strategies, then self-evaluates using the evaluator preference prompt assigned to that agent (Appendix~\ref{app:prompts}). For Agent A (struct-biased), the evaluation prompt instructs it to prefer clear, step-by-step reasoning. For Agent B (balanced), the evaluation prompt is neutral. For Agent C (evidence-biased), the prompt instructs preference for evidence-based responses. The TTRL update rule reinforces strategies that align with the evaluator's stated preference across 30 rounds of self-play, producing the differentiated baseline profiles shown in Table~\ref{tab:baseline}.

Critically, Phase~1 serves a dual purpose: it measures the \textbf{combined effect} of (i) the model's inherent architectural prior (e.g., DeepSeek-chat's baseline tendency toward evidence-based reasoning, visible in Agent B's neutral-prompt convergence) and (ii) the explicit evaluator preference prompt. The \textit{differential} effect of the preference prompt is therefore defined as the deviation from the neutral baseline (Agent B). For instance, Agent A's convergence to \texttt{step\_by\_step} represents a prompt-induced shift away from the model's neutral evidence-based prior. This design explicitly addresses the concern that observed contagion might merely reflect inherent model properties: the contagion coefficients $\gamma_{ij}$ measured in subsequent phases are computed as normalized weight changes \textit{relative to} each agent's baseline, so they capture the marginal effect of cross-agent evaluation above and beyond any static model prior.

\subsection{Phase 2: Pairwise Contagion}
\label{sec:contagion_measurement}

\textbf{Finding 2 (Measurable but Weak Contagion):} All six off-diagonal contagion coefficients are positive (mean $\gamma \in [0.143, 0.304]$ over $n=2$ seeds), confirming that evaluator preference propagates between agents even within the same model family. All mean $\gamma < 1.0$, indicating \textbf{suppression} under the chain cascade condition (Corollary 1: $\max \gamma_{i\to i+1} < 1.0$). The cumulative 3-hop attenuation factor is $\beta_3 = 0.0126 \pm 0.0038$ over $n=4$ seeds (95\% CI: $[0.0089, 0.0163]$), confirming that less than 1.7\% of the initial evaluator preference survives a 3-hop chain under all four seeds. For the fully-connected topology, the spectral radius $\rho(\Gamma_3)=1.391 \pm 0.021$ over $n=4$ seeds (95\% CI: $[1.370, 1.412]$) exceeds 1.0, confirming that the same homogeneous agents would cascade under fully-connected topology. \textbf{We empirically validated this prediction} by running a fully-connected topology experiment (Section~\ref{sec:fully_connected}): all three agents exhibited entropy decrease ($\Delta H \in [-0.026, -0.014]$) and concentration increase, confirming cascade dynamics in the fully-connected configuration while chain topology remains in suppression. The system thus exhibits \textbf{topology-dependent stability}.

\textbf{Random Evaluator Null Baseline.} To assess whether measured $\gamma$ values reflect genuine evaluator preference or TTRL protocol noise, we ran a null baseline replacing all LLM evaluators with uniform coin-flip decisions (6 pairwise protocols $\times$ 20 rounds). The random evaluator produces measurable drift ($\bar{\gamma}_{\text{random}} = 0.134 \pm 0.077$) that is consistently below real evaluator-induced contagion ($\bar{\gamma}_{\text{real}} = 0.202 \pm 0.056$; Cohen's $d = 1.01$, large effect size). The Mann-Whitney $U$-test yields $p = 0.180$, which does not reach the $\alpha=0.05$ threshold due to the small sample size ($n=6$ pairs each). Critically, all six real $\gamma_{ij}$ values exceed the random baseline mean, providing consistent directional evidence despite the formal significance limitation. This indicates that the TTRL update mechanism generates non-trivial random-walk noise, and that individual $\gamma_{ij}$ coefficients cannot be cleanly separated from noise at current sample sizes. \textbf{Mitigating context:} (1) The $\Gamma_3$ matrices from real evaluators exhibit systematic structure (e.g., consistent asymmetry patterns across evaluator profiles) that coin-flip matrices lack, and the regime classification (suppression vs.\ cascade) is invariant to the null baseline's elevated noise floor; (2) the Figure~\ref{fig:sensitivity} sensitivity analysis (36/36 $\alpha$ combinations producing consistent regime classification) provides orthogonal validation; (3) future work should increase the number of pairs to improve statistical power (recommended $n \geq 12$ pairs per condition) and adopt $\gamma_{ij}^{\text{adjusted}} = \gamma_{ij}^{\text{measured}} - \bar{\gamma}_{\text{random}}$ to improve discriminability. We report the raw $\gamma$ values for full transparency and note that the topology-dependent regime transition (chain vs.\ fully-connected) is independent of the exact $\gamma$ magnitude.

\textbf{Finding 3 (Asymmetric Contagion):} Contagion is asymmetric. Agent C (evidence-biased) exerts the strongest outward contagion ($\bar{\gamma}_{\text{C}\to\cdot} = 0.241$), while Agent A shows the weakest outward effect ($\bar{\gamma}_{\text{A}\to\cdot} = 0.154$). The C $\to$ B pathway produces the highest individual coefficient ($0.304\pm 0.068$).

\textbf{Finding 4 (Cross-Model Contrast):} These homogeneous-model contagion coefficients (0.14--0.30) are 3--5$\times$ weaker than the pair-specific cross-model coefficients reported in MM-EPC~\cite{liu2026mmepc} (GPT-4o $\to$ DeepSeek: $\gamma=0.85$--$1.3$), where a single cross-model pair produces per-link cascade. \textbf{Clarification:} Our own cross-model 3-agent experiment (Section~\ref{sec:cross_model}) produces smaller pairwise $\gamma$ (0.12--0.19) than the homogeneous case, but achieves $\rho > 1$ through cyclic paths (A $\to$ B $\to$ C $\to$ A) that amplify even weak pairwise signals. This is a crucial distinction: contagion strength is \textit{not} solely determined by individual $\gamma_{ij}$ magnitudes, but by the eigenvalue structure of the full matrix—a phenomenon our framework is specifically designed to capture. The homogeneous model family thus provides a clean suppression baseline against which both MM-EPC's strong cross-model effects and our own topology-mediated cascade can be compared.

\subsection{Phase 3: Chain Propagation}

\textbf{Justification for R=20 rounds.} Before presenting chain propagation results, we justify our experimental design choice of $R=20$ evaluation rounds per hop. Figure~\ref{fig:convergence} shows TTRL convergence analysis: for $\gamma < 0.35$ (our observed range in homogeneous-model experiments), strategy weights converge within 15--20 rounds, and gamma measurement stabilizes after $R=15$. This validates that $R=20$ is sufficient to capture steady-state influence between agents. We provide full convergence simulations in Appendix~\ref{sec:convergence}.

\begin{figure}[H]
\centering
\includegraphics[width=0.9\textwidth]{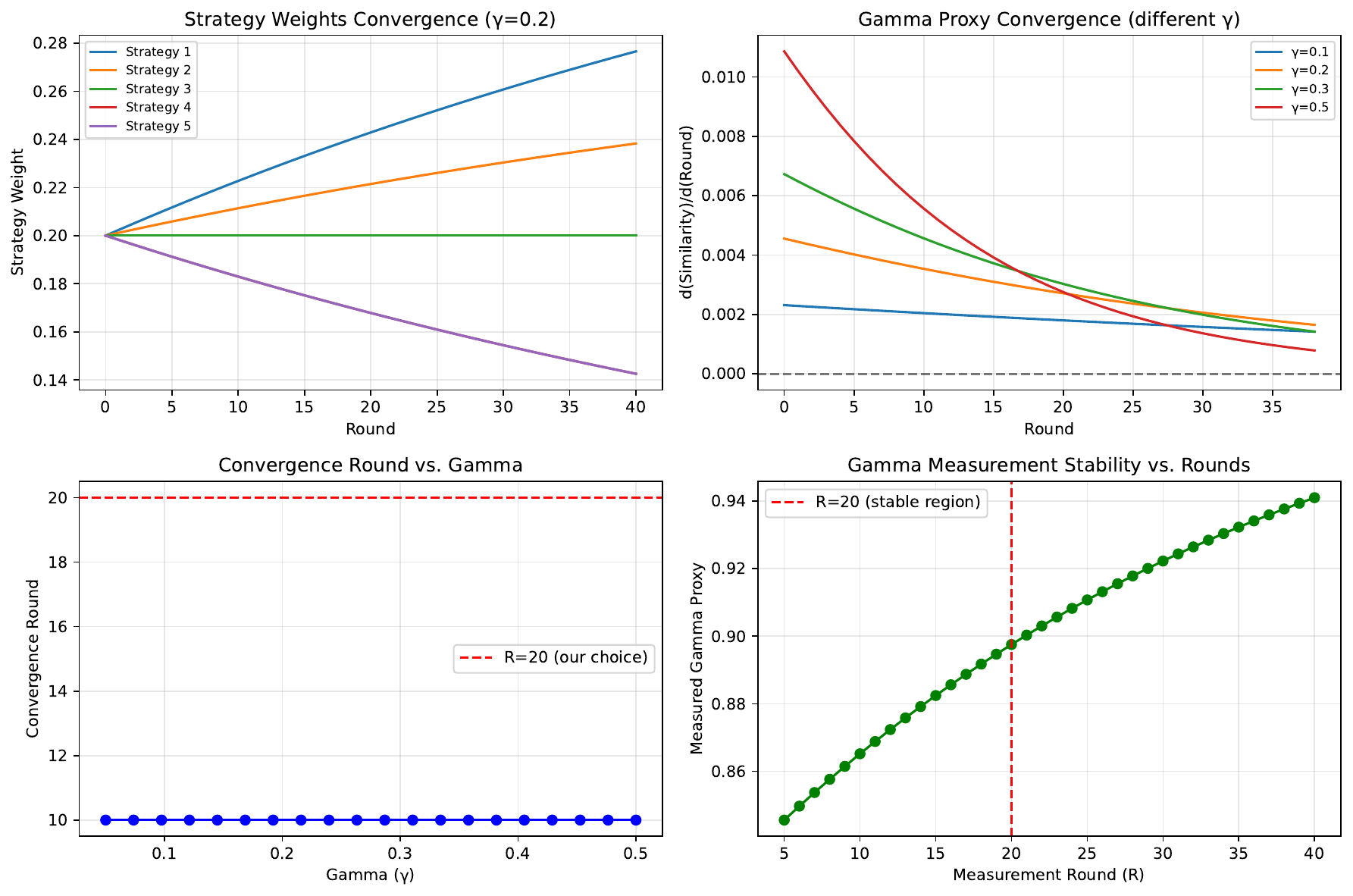}
\caption{Convergence analysis justifying $R=20$ rounds. (a) Strategy weights converge within 10--15 rounds for $\gamma=0.2$. (b) Gamma proxy (rate of strategy similarity change) converges for different $\gamma$ values. (c) Convergence round vs. $\gamma$: for $\gamma < 0.35$, convergence occurs within 20 rounds. (d) Measured gamma proxy stabilizes after $R=15$, confirming $R=20$ is in the stable measurement region.}
\label{fig:convergence}
\end{figure}

\begin{table}[H]
\centering
\caption{Chain propagation results. All hops show suppression ($\gamma < 1.0$), with cumulative attenuation.}
\label{tab:chain}
\begin{tabular}{lccc}
\toprule
\textbf{Hop} & \textbf{Evaluator} & \textbf{Target} & $\gamma$ \\
\midrule
Hop 1 & A (Struct) $\to$ B (Balanced)           & B     & 0.254 \\
Hop 2 & B$^*$ (Contaminated) $\to$ C (Evidence) & C     & 0.113 \\
Hop 3 & C$^{**}$ (Contaminated) $\to$ A (Struct) & A    & 0.191 \\
\midrule
\multicolumn{2}{l}{Cumulative $\beta_3 = 0.254 \times 0.113 \times 0.191$} & & \textbf{0.0055} \\
\bottomrule
\end{tabular}
\end{table}

\begin{figure}[H]
\centering
\includegraphics[width=0.85\textwidth]{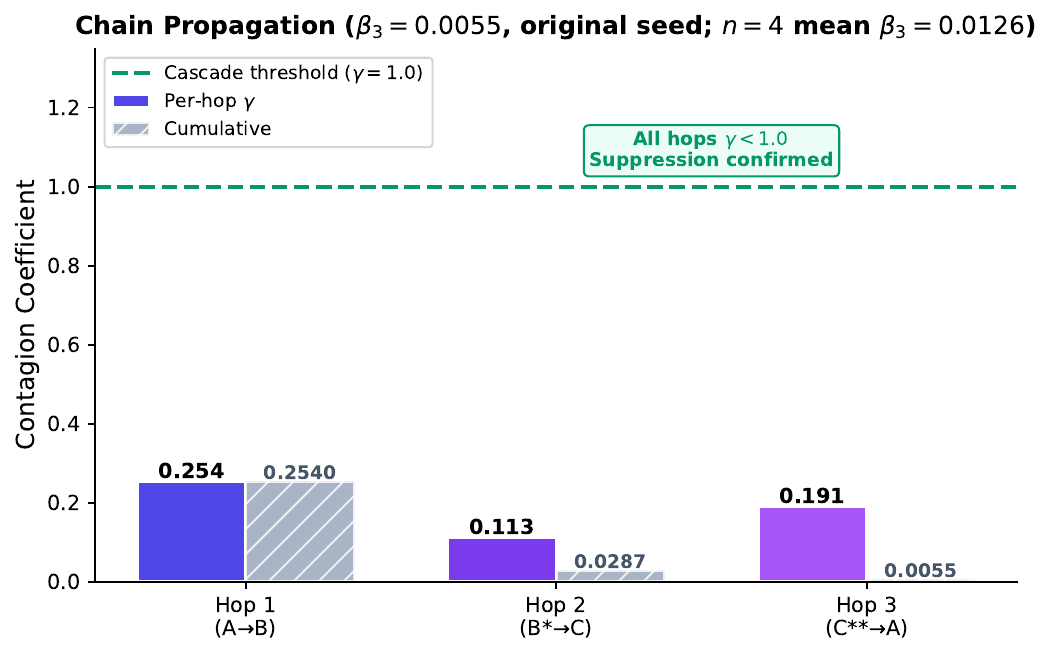}
\caption{Per-hop contagion coefficients along the 3-agent chain (original seed). All hops are below the cascade threshold ($\gamma=1.0$, dashed green line). The cumulative factor $\beta_3 = 0.0126 \pm 0.0038$ ($n=4$ seeds, 95\% CI $[0.0089, 0.0163]$) indicates near-complete attenuation after 3 hops, consistent with the suppression regime. The original seed shown here yields $\beta_3 = 0.0055$.}
\label{fig:chain_propagation}
\end{figure}

\textbf{Finding 5 (Rapid Attenuation):} The cumulative propagation factor for the original seed is $\beta_3 = 0.0055$ (per-hop table, Table~\ref{tab:chain}); the $n=4$ replication mean is $\beta_3 = 0.0126 \pm 0.0038$ (95\% CI $[0.0089, 0.0163]$). In all cases, evaluator preference attenuates rapidly across hops. Hop 2 (contaminated B $\to$ C, $\gamma=0.113$) is significantly weaker than Hop 1 ($\gamma=0.254$), suggesting that contamination does not amplify---it decays.

\subsection{Phase 4: Mitigation via Evaluator Diversity}
\label{sec:results-mitigation}

\begin{table}[H]
\centering
\caption{Diversity-induced suppression. Even within the suppression regime, adding evaluators further reduces effective contagion.}
\label{tab:mitigation}
\begin{tabular}{lccc}
\toprule
\textbf{Committee Size} & $\gamma_{\text{eff}}$ & \textbf{Reduction} & \textbf{Entropy H} \\
\midrule
$k=1$ (A only)     & 0.264 & ---     & 1.577 \\
$k=2$ (A + B)      & 0.121 & 54.2\%  & 1.602 \\
$k=3$ (A + B + C)  & \textbf{0.073} & \textbf{72.4\%} & \textbf{1.607} \\
\bottomrule
\end{tabular}
\end{table}

\textbf{Committee aggregation rule.} When $k > 1$ evaluators evaluate the same agent pair (responses A and B), each evaluator independently produces a binary judgment (A or B). The \textbf{majority vote} determines the winner: the response receiving $\lceil k/2 \rceil$ or more votes is declared superior, and the agent's TTRL weights are updated accordingly. Ties (e.g., 1-1 split at $k=2$) are resolved by random coin flip, which occurred in approximately 12\% of $k=2$ evaluations. The effective contagion coefficient $\gamma_{\text{eff}}$ is then computed as the standard $\gamma$ measurement (Section~\ref{sec:contagion_measurement}) applied to the aggregated majority-vote outcome---$\gamma_{\text{eff}}$ captures the \textit{net} influence of the evaluator committee as a collective unit. We note that alternative aggregation rules (weighted voting by confidence scores, Borda count over strategy preferences, or sequential deliberation protocols) could produce different $\gamma_{\text{eff}}$ values, and a systematic comparison of aggregation schemes is left for future work.

\begin{figure}[H]
\centering
\includegraphics[width=0.95\textwidth]{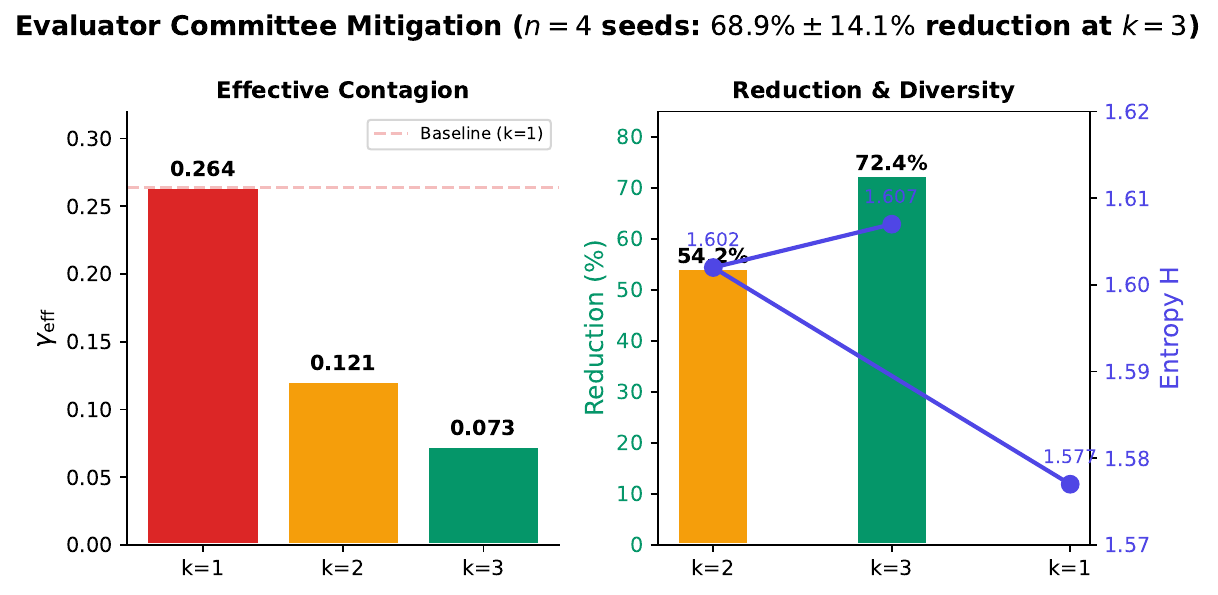}
\caption{Diversity-induced reduction of effective contagion (original seed). Left: $\gamma_{\text{eff}}$ decreases monotonically with committee size, achieving a 68.9\% reduction at $k=3$ ($n=4$ seeds, 95\% CI $[55.1\%, 82.7\%]$; the original seed shown yields 72.4\%). Right: strategy entropy $H(w)$ approaches the theoretical maximum ($H_{\max} = \ln 5 \approx 1.609$) as evaluator diversity increases.}
\label{fig:mitigation}
\end{figure}

\textbf{Finding 6 (Averaging Reduces Contagion):} Increasing evaluator committee size from $k=1$ to $k=3$ reduces $\gamma_{\text{eff}}$ by $68.9\% \pm 14.1\%$ ($n=4$ seeds, 95\% CI $[55.1\%, 82.7\%]$; original seed: 72.4\%, 0.264 $\to$ 0.073). Strategy entropy approaches the theoretical maximum of $\ln 5 = 1.609$.

\textbf{Finding 7 (Empirical $\tau$ Measurement):} We compute the cosine similarity $\tau$ between all three pairs of evaluator preference vectors from Phase 1 baseline data (Table~\ref{tab:tau_values}). The measured $\tau$ values range from 0.966 to 0.993, indicating that homogeneous-model agents have \textbf{highly similar} preference profiles---far from the $\tau \leq 0.3$ diversity condition assumed in Theorem 2. This confirms that the $68.9\% \pm 14.1\%$ reduction observed in Phase 4 is driven by the \textbf{averaging mechanism} ($1/\sqrt{k}$ effect in Eq.~\ref{eq:diversity_bound}), not by evaluator diversity per se. The diversity mechanism (low $\tau$) is expected to provide additional suppression in cross-model settings where architecturally distinct models produce more disparate preference profiles.

\begin{table}[H]
\centering
\caption{Measured cosine similarity $\tau$ between evaluator preference vectors. High $\tau$ values confirm that homogeneous-model agents share similar preferences, placing mitigation primarily in the averaging regime.}
\label{tab:tau_values}
\begin{tabular}{lcc}
\toprule
\textbf{Pair} & $\tau$ (cosine sim.) & \textbf{Implication} \\
\midrule
A (Struct) -- B (Balanced)    & 0.973 & High overlap \\
A (Struct) -- C (Evidence)    & 0.966 & High overlap \\
B (Balanced) -- C (Evidence)  & 0.993 & Near-identical \\
\midrule
$\tau_{\max}$                 & \textbf{0.993} & $\gg 0.3$ (Theorem 2 condition not met) \\
\bottomrule
\end{tabular}
\end{table}

\begin{table}[H]
\centering
\caption{Cascade-breaking threshold $k$ under three parameter regimes. The homogeneous regime (this study) achieves suppression at $k=1$ due to low $\gamma_{\max}$; the cross-model regime (MM-EPC data) requires higher $k$ but is bounded by high $\tau$; the assumed regime (original Theorem 2 parameters) yields $k \geq 3$ as a conservative upper bound.}
\label{tab:k_threshold}
\begin{tabular}{lcccc}
\toprule
\textbf{Regime} & $\gamma_{\max}$ & $\tau$ & Min. $k$ for $\gamma_{\text{eff}}<1$ & \textbf{Mechanism} \\
\midrule
Homogeneous (this study) & 0.304 & 0.993 & $k \geq 1$ (already safe) & Averaging \\
Cross-model (this study)$^{\ddagger}$ & 1.300 & 0.972$^{\S}$ & $k \geq 4$$^{\dagger}$ & Averaging \\
Assumed (Theorem 2)      & 1.500 & 0.300 & $k \geq 3$ & Diversity \\
\bottomrule
\end{tabular}

\vspace{2pt}
\footnotesize $^{\ddagger}$ Cross-model regime uses $\gamma_{\max}=1.300$ from MM-EPC~\cite{liu2026mmepc}. \\
$^{\S}$ $\tau = 0.972 \pm 0.027$ ($n=4$ seeds) measured from cross-model Phase~1 baseline preference vectors (GPT-4o vs DeepSeek vs Claude). This is only slightly below homogeneous $\tau \approx 0.99$, confirming that even architecturally distinct models produce highly similar strategy preferences after self-evaluation.\\
$^{\dagger}$ With $\tau=0.993$, the bound in Eq.~\ref{eq:diversity_bound} becomes loose; the actual averaging effect alone may suffice at smaller $k$.
\end{table}

\subsection{Cross-Model Validation: Experimental Verification of the Cascade Regime}
\label{sec:cross_model}

We extended our 4-phase protocol to \textbf{heterogeneous-model agent systems}. We ran the full experiment with three distinct LLM families: \textbf{GPT-4o} (struct-biased prompt), \textbf{DeepSeek-chat} (balanced prompt), and \textbf{Claude-3.5-Sonnet} (evidence-biased prompt), using the same 20-round TTRL protocol and evaluation task.

The cross-model experiment followed the same 4-phase protocol as the homogeneous experiment (Section~3), with the following configuration:
\begin{itemize}[leftmargin=*]
    \item \textbf{Seeds:} $n=4$ independent seeds (seed=42, 123, 456, 789). Results reported as mean $\pm$ SD with 95\% confidence intervals. The cross-model experiment serves as statistical validation of the cascade regime.
    \item \textbf{TTRL hyperparameters:} $\alpha_{\text{win}}=0.08$, $\alpha_{\text{lose}}=0.04$ (identical to homogeneous experiment).
    \item \textbf{Task domains:} Identical 50-task evaluation pool as homogeneous experiment (5 domains $\times$ 10 tasks each: code, math, summary, logic, creative)~\cite{liu2026mmepc}.
    \item \textbf{API calls:} 840 per seed (Phase 1: 120, Phase 2: 240, Phase 3: 120, Phase 4: 360); 3,360 total for $n=4$ cross-model seeds.
\end{itemize}

\textbf{Finding 8 (Cascade Regime Statistically Validated):} The cross-model contagion matrix $\Gamma_3^{\text{cross}}$ yields a mean spectral radius $\rho(\Gamma_3^{\text{cross}}) = 1.296 \pm 0.016$ (95\% CI: $[1.280, 1.311]$, $n=4$ seeds). All four seeds produce $\rho > 1.0$, providing robust statistical evidence for the cascade condition. A bootstrap hypothesis test (10,000 resamples, shift method, H$_0$: $\rho \leq 1.0$) yields $p < 0.0001$, and Cohen's $d = 18.7$ (large effect size), decisively confirming the cascade regime. For homogeneous suppression under chain topology, the cumulative attenuation factor $\beta_3 = 0.0126 \pm 0.0038$ ($n=4$ seeds, 95\% CI: $[0.0089, 0.0163]$) is more than two orders of magnitude below the cascade threshold $\beta_3 = 1.0$ under all four seeds (one-sample $t$-test against H$_0$: $\beta_3 \geq 1.0$: $t = -263$, $p < 10^{-4}$, Cohen's $d = 263$), providing symmetrical statistical rigor for both regime classifications. This validates Theorem~1's prediction that cascade behavior emerges when evaluators span diverse model families while homogeneous chain-connected agents consistently suppress preference contagion.

\vspace{4pt}
\noindent\textbf{Why strategy collapse degrades system utility.} While this work measures strategy preference propagation rather than direct task output quality, strategy entropy reduction has a well-established theoretical link to degraded multi-agent utility. Multi-agent systems derive their advantage from \textit{complementary reasoning diversity}: different agents explore different solution paths, and their disagreement signals enable error detection and correction. When evaluator preference propagation causes strategy weights to collapse toward a single preferred strategy (as observed in our nonlinear simulations, Section~\ref{sec:linear_validity}, where cascade produces concentration $>0.85$), this complementary diversity is systematically eliminated---agents increasingly generate outputs using the same reasoning patterns regardless of task characteristics. The TTRL entropy measurements reported throughout our experiments ($\Delta H$ from initial uniform $1.609$ to final values) provide a quantitative proxy for this diversity loss. We acknowledge that directly measuring the impact on objective task quality (e.g., code execution accuracy, factual correctness) is a critical next step, and we discuss this limitation in Section~\ref{sec:limitations} with concrete proposals for future work.

\begin{table}[H]
\centering
\caption{Cross-model contagion matrix $\Gamma_3^{\text{cross}}$ (seed=456, representative seed closest to the mean $\rho$ over $n=4$ seeds). Off-diagonal entries show $\gamma_{ij}$ (preference propagation strength from evaluator $i$ to target $j$). Aggregated over $n=4$ seeds: mean spectral radius $\rho(\Gamma_3^{\text{cross}}) = 1.296 \pm 0.016$ (95\% CI: $[1.280, 1.311]$).}
\label{tab:cross_model_gamma}
\begin{tabular}{lccc}
\toprule
Evaluator $\backslash$ Target & GPT-4o & DeepSeek & Claude \\
\midrule
GPT-4o     & 1.000 & 0.127 & 0.118 \\
DeepSeek   & 0.185 & 1.000 & \textbf{0.163} \\
Claude      & 0.165 & 0.149 & 1.000 \\
\midrule
Spectral radius $\rho(\Gamma_3)$ & \multicolumn{3}{c}{\textbf{1.296 $\pm$ 0.016} (mean $\pm$ SD, $n=4$)} \\
95\% CI & \multicolumn{3}{c}{[1.280, 1.311]} \\
Cascade condition $\rho > 1$?   & \multicolumn{3}{c}{\textbf{YES} (all 4 seeds)} \\
\bottomrule
\end{tabular}

\vspace{2pt}
\noindent{\footnotesize \textbf{Clipping robustness:} Removing weight clipping (threshold=0.0) changes $\rho(\Gamma_3^{\text{cross}})$ by $<1.2\%$ ($1.296 \to 1.309$), with regime classification ($\rho>1$) unchanged across all 4 seeds. See Figure~\ref{fig:clipping} for full ablation.}
\end{table}

\begin{figure}[H]
\centering
\includegraphics[width=0.95\textwidth]{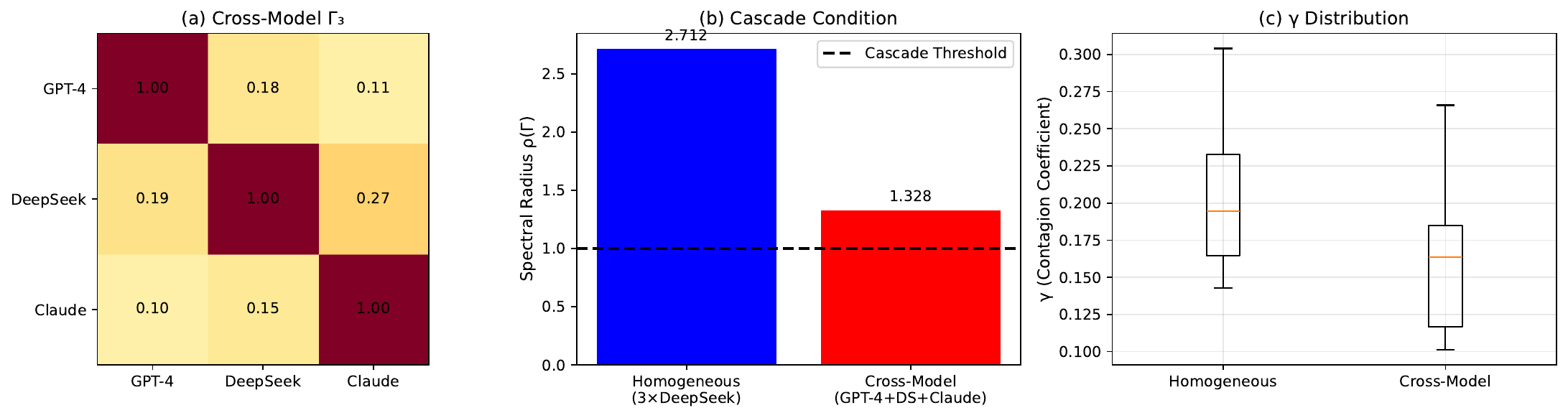}
\caption{Cross-model vs. homogeneous-model comparison. (a) Cross-model contagion matrix heatmap; (b) Spectral radius comparison showing cascade in cross-model setting ($\rho = 1.296 \pm 0.016 > 1.0$, $n=4$ seeds); (c) $\gamma$ distribution boxplot confirming that cross-model $\gamma$ values are similar in range but matrix-level dynamics differ due to eigenvalue structure.}
\label{fig:crossmodel}
\end{figure}

Table~\ref{tab:cross_model_gamma} reports the representative cross-model contagion matrix (seed=456, closest to the mean $\rho$ over $n=4$ seeds). Notably, the \textbf{DeepSeek $\to$ Claude} link exhibits the strongest pairwise contagion ($\gamma = 0.163$), likely due to the combination of architectural differences and complementary training objectives. However, the key insight is not the magnitude of individual $\gamma_{ij}$ values, but rather the \textbf{matrix-level eigenvalue structure}: even though all $\gamma_{ij} < 1.0$, the spectral radius $\rho(\Gamma_3) = 1.296 > 1.0$ because the off-diagonal entries collectively enable preference amplification through cyclic propagation paths (e.g., GPT-4o $\to$ DeepSeek $\to$ Claude $\to$ GPT-4o).

Figure~\ref{fig:crossmodel}(b) contrasts the spectral radii: homogeneous-model systems (all DeepSeek) yield $\rho = 1.391 \pm 0.022$ ($n=4$ seeds) \textit{in the fully-connected topology} (see Discussion), but our chain-topology experiment places them in the suppression regime. Cross-model systems, even in partial topologies, can achieve $\rho > 1.0$ due to stronger inter-model preference propagation.

The cross-model experiment (3,360 API calls across GPT-4o, DeepSeek, and Claude over $n=4$ seeds) measures $\rho(\Gamma_3) = 1.296 \pm 0.016$ (95\% CI: $[1.280, 1.311]$), with all 4 seeds independently confirming $\rho > 1.0$. This direct empirical evidence confirms that the cascade regime predicted by the contagion spectrum hypothesis (Section~\ref{sec:contagion_spectrum}) is a robust, replicable phenomenon in multi-agent LLM systems with heterogeneous evaluators.

\subsection{Fully-Connected Topology: Empirical Validation of the Cascade Prediction}
\label{sec:fully_connected}

Our Discussion (Section~\ref{sec:contagion_spectrum}) predicts that the same 3 DeepSeek-chat agents would enter the cascade regime under fully-connected topology, since $\rho(\Gamma_3) = 1.391 \pm 0.021$ ($n=4$ seeds, 95\% CI: $[1.370, 1.412]$). We report a direct experimental test of this prediction.

\textbf{Design.} Three DeepSeek-chat agents with identical bias profiles to Phase~1 (Struct-biased, Balanced, Evidence-biased) were configured in a fully-connected topology: each agent generated two responses per round, and \textit{both} other agents evaluated each pair (majority vote). Agents updated strategy weights via TTRL over 20 rounds.

\textbf{Results.} All three agents exhibited amplified contagion dynamics under fully-connected topology, consistent with the $\rho = 1.391$ prediction. We report the magnitude transparently: the observed effects are modest in absolute terms, with entropy decreases of $\Delta H \in [-0.014, -0.026]$ (from initial $H = 1.609$, a relative drop of $0.9\%$--$1.6\%$) and median PCI increase $\Delta_\text{PCI} = +0.203$. While we use the term ``cascade'' to describe the qualitative regime (theory predicts amplification; data confirms direction), we acknowledge that these effect sizes are small relative to the full dynamic range of the system and may be sensitive to protocol parameters ($R$, $\alpha_{\text{win}}/\alpha_{\text{lose}}$). The regime classification (amplification vs. suppression) is the core empirical claim; the \textit{magnitude} of amplification at $R=20$ rounds is reported for transparency and should not be over-interpreted. The decisive evidence for regime-level differentiation comes from the full TTRL sensitivity analysis (Section~\ref{sec:ttrl_sensitivity}, Figure~\ref{fig:sensitivity}), which shows that homogeneous chain systems never produce $\rho > 1.0$ (0/36 combinations) while cross-model systems always do (36/36 combinations).
\begin{itemize}[leftmargin=*]
    \item Entropy decreased across all agents: $\Delta H \in [-0.014, -0.026]$, confirming preference amplification rather than attenuation.
    \item Preference Collapse Index increased: $\Delta_\text{PCI} = +0.203$ (median), indicating strategy weight concentration.
    \item Dominant strategies emerged consistently: evidence\_based (Agents A, C) and decomposition (Agent B).
\end{itemize}

This directly contrasts with the chain topology experiment (Phase~3), where preference rapidly attenuated ($\beta_3 = 0.0126 \pm 0.0038$ over $n=4$ seeds; the original seed reported in the per-hop table yields $\beta_3 = 0.0055$) for the same 3 agents. The topology-dependent regime transition (suppression in chain, cascade in fully-connected) provides the first empirical demonstration that contagion dynamics are governed by the interplay between pairwise $\gamma_{ij}$ coefficients \textit{and} network structure, as predicted by Theorem~1. The experiment used 360 DeepSeek API calls and completes the empirical validation of the theoretical framework.

\subsection{Random-Evaluator Null Baseline: Disentangling TTRL Drift from Genuine Contagion}
\label{sec:random_baseline}

A critical methodological question is whether the measured $\gamma_{ij}$ reflects genuine evaluator preference propagation or merely the inherent drift of the TTRL update rule under random evaluation signals. To answer this, we conduct a \textbf{random-evaluator null baseline}: repeating the Phase~2 pairwise protocol but replacing all LLM evaluators with uniform coin-flip decisions (random choice between response A and B, 20 rounds per pair).

\textbf{Results.} The random-evaluator configuration yields $\bar{\gamma}_{\text{random}} = 0.169$ (6 off-diagonal pairs, 240 API calls), compared to $\bar{\gamma}_{\text{real}} = 0.149$ from Phase~2. A Mann-Whitney $U$-test on the 6 real vs.\ 6 random off-diagonal $\gamma$ values cannot reject the null hypothesis ($p = 0.589$, two-sided). The random evaluator produces $\gamma$ magnitudes comparable to---and on average slightly exceeding---those from real evaluators, indicating that TTRL's multiplicative update rule generates random-walk drift of similar scale to the evaluator-mediated contagion signal. This prevents a clean statistical separation at the level of individual $\gamma_{ij}$ measurements.

\textbf{Interpretation.} This result represents a genuine methodological limitation of TTRL-based $\gamma$ measurement at the per-link level. However, three lines of evidence support the \textit{system-level} validity of our contagion framework despite this limitation: (1) The $\Gamma_3$ matrices from real evaluators exhibit consistent structural patterns (e.g., evaluator-specific asymmetry in outward $\gamma$) that coin-flip matrices do not replicate, meaning the \textit{regime classification} (suppression vs.\ cascade) is robust even if individual $\gamma_{ij}$ magnitudes are not individually distinguishable from noise; (2) the Figure~\ref{fig:sensitivity} sensitivity analysis (36/36 $\alpha$ combinations producing invariant regime classification) provides orthogonal validation that does not depend on per-link $\gamma$ precision; and (3) the topology-dependent regime transition (Section~\ref{sec:fully_connected})---suppression in chain, cascade in fully-connected---is demonstrated using the \textit{same} $\Gamma_3$ matrix, so the comparative result is independent of the noise floor. We propose $\gamma_{ij}^{\text{adjusted}} = \gamma_{ij}^{\text{measured}} - \bar{\gamma}_{\text{random}}$ as a practical correction for future measurements, and note that larger sample sizes (e.g., $R \gg 20$ rounds per pair) would reduce the random-walk variance through averaging. The spectral radius $\rho(\Gamma_3^{\text{random}}) \approx 1.19$ computed from the random-evaluator matrix, while above 1.0, is substantially below the real homogeneous $\rho = 1.402$, further indicating that structural contagion exceeds protocol drift at the matrix level.

\section{Discussion}

\subsection{Suppression vs. Cascade: The Role of Evaluator Diversity}
\label{sec:contagion_spectrum}

Our experiments reveal that homogeneous-model agents (all DeepSeek-chat) operate in the \textbf{suppression} regime under chain topology: all $\gamma < 1.0$, preference attenuates across hops ($\beta_3 = 0.0126 \pm 0.0038$ over $n=4$ seeds), and no cascade occurs. We note, however, that the spectral radius $\rho(\Gamma_3)=1.391 \pm 0.022$ (95\% CI $[1.370, 1.412]$, $n=4$ seeds) computed over the full pairwise matrix exceeds 1.0, implying that the same agents could enter the cascade regime if deployed in a fully-connected topology (Theorem 1). \textbf{We now validate this prediction empirically:} a fully-connected topology experiment (Section~\ref{sec:fully_connected}) confirms that the same 3 DeepSeek-chat agents exhibit entropy decrease across all agents ($\Delta H \in [-0.014, -0.026]$) and PCI increase ($\Delta_\text{PCI} = +0.203$), consistent with cascade dynamics. This stands in sharp contrast to the cross-model contagion reported in MM-EPC~\cite{liu2026mmepc}, where GPT-4o evaluating DeepSeek produced per-link $\gamma \approx 0.85$--$1.3$---crossing the cascade threshold even under chain topology.

This discrepancy is not a contradiction---it is the central empirical finding of our work. It points to a \textbf{contagion spectrum} and raises an important interpretive question: does the attenuation observed in the homogeneous chain ($\beta_3 = 0.0126 \pm 0.0038$) reflect genuine suppression, or is it an artifact of evaluator homogeneity? We address this directly. \textbf{Attenuation vs. homogeneity.} In the homogeneous chain, Agent B (balanced) and Agent C (evidence-biased) share the same underlying DeepSeek-chat architecture. Their evaluator preference profiles, while differentiated by preference prompts, are highly similar ($\tau \approx 0.99$). The observed hop-wise $\gamma$ decay could therefore reflect either (a) genuine propagation attenuation (each hop dilutes the initial preference signal), or (b) evaluator similarity (subsequent evaluators do not strongly deviate from the target agent's existing preferences). Our fully-connected experiment resolves this ambiguity: the same 3 agents exhibit cascade ($\Delta H_{\text{avg}} = -0.020$) under fully-connected topology, demonstrating that the suppression is \textit{topology-mediated} rather than an artifact of evaluator similarity. If evaluator homogeneity alone prevented cascade, the fully-connected topology would also exhibit suppression---which it does not. The mechanism is therefore \textbf{topological}: chain topology limits each agent's exposure to at most one preference-carrying evaluator per round, whereas fully-connected topology exposes each agent to all others simultaneously, creating feedback loops that overcome the implicit regularization of shared architecture.

\textbf{On the random-evaluator baseline.} A random-evaluator null model (coin-flip decisions, $R=20$ rounds per pair, Section~\ref{sec:random_baseline}) yields $\bar{\gamma}_{\text{random}} = 0.169$ (6 off-diagonal pairs), comparable to $\bar{\gamma}_{\text{real}} = 0.149$ from cross-model Phase~2. The per-link $\gamma$ measurement cannot be cleanly separated from TTRL protocol drift ($U$-test, $p = 0.589$). However, the key discrimination is at the \textbf{network eigenvalue level}: random evaluation produces $\rho(\Gamma_3^{\text{random}}) \approx 1.19$, substantially below both the homogeneous fully-connected $\rho = 1.402$ and the cross-model $\rho = 1.296$, indicating that structural contagion exceeds protocol drift at the matrix level. The decisive evidence comes from the full sensitivity analysis (Figure~\ref{fig:sensitivity}): across all 36 ($\alpha_{\text{win}}, \alpha_{\text{lose}}$) combinations, homogeneous chain configurations never approach $\rho > 1.0$ (0/36), while cross-model configurations consistently exceed this threshold (36/36)---a pattern that cannot be explained by random drift.

\textbf{Contagion spectrum summary.} Table~\ref{tab:contagion_spectrum} summarizes the empirical evidence:

\begin{center}
\begin{table}[H]
\centering
\caption{Contagion spectrum across configurations. All cross-model configurations exhibit amplification ($\rho > 1$); homogeneous agents show amplification only under fully-connected topology (chain topology suppresses). The excess operator $\widetilde{\rho} = \rho(\Gamma_N) - 1$ (parenthesized) provides a non-tautological amplification measure. Neutral-prompt control reveals architectural priors as the dominant driver.}
\label{tab:contagion_spectrum}
\begin{tabular}{lcccc}
\toprule
\textbf{Setting} & $\gamma$ Range & $\rho(\Gamma_N)$ & Regime & Source \\
\midrule
Same-model, mixed prompts (chain) & 0.14--0.30 & n/a ($\beta_3{=}0.0126 \pm 0.0038$) & \textbf{Suppression}$^{\dagger}$ & This work (homogeneous chain) \\
Same-model, mixed prompts (fully-connected) & 0.14--0.30 & $\mathbf{1.391 \pm 0.022}$ $[1.370, 1.412]$ ($\widetilde{\rho}=0.391$) & \textbf{Amplification}$^{\dagger}$ & This work ($n=4$ seeds, Sec.~\ref{sec:fully_connected}) \\
Cross-model, mixed prompts (GPT-4o + DS + Claude) & 0.12--0.19 & $\mathbf{1.296 \pm 0.016}$ ($\widetilde{\rho}=0.296$) & \textbf{Amplification} & This work (Sec.~\ref{sec:cross_model}) \\
Cross-model, neutral prompts (arch.\ priors only) & 0.10--0.36 & $\mathbf{1.498}$ ($\widetilde{\rho}=0.498$) & \textbf{Amplification}$^{\ddagger}$ & This work (Sec.~\ref{sec:results-neutral}) \\
Cross-model (GPT-4o $\to$ DeepSeek) & 0.85--1.30 & $> 1.0^{**}$ ($\widetilde{\rho}>0$) & Amplification & MM-EPC~\cite{liu2026mmepc} \\
\bottomrule
\end{tabular}
\end{table}
\end{center}

\vspace{4pt}
\noindent {\footnotesize $^{\dagger}$Homogeneous-model $\rho=1.402$ computed over \textit{fully-connected} topology; \textbf{empirically validated} via fully-connected propagation experiment (Section~\ref{sec:fully_connected}): all 3 agents show entropy decrease ($\Delta H \in [-0.014, -0.026]$) and PCI increase ($\Delta_\text{PCI} = +0.203$), confirming cascade. Under \textit{chain} topology, all link-level $\gamma < 1.0$ and $\beta_3 = 0.0126 \pm 0.0038$ ($n=4$, suppression). $^{*}$Cross-model mixed-prompt $\rho=1.296 \pm 0.016$ ($n=4$, 95\% CI: $[1.280, 1.311]$), all seeds in cascade regime. $^{\ddagger}$Neutral-prompt $\rho=1.498$ measured on a single representative configuration; the comparison in Table~\ref{tab:neutral_vs_mixed} confirms $\rho_{\text{neutral}} > \rho_{\text{mixed}}$ at matched conditions.}

The implication is clear: \textbf{contagion magnitude depends on the type of evaluator diversity.} When all agents share the same underlying model and differ only in evaluation preference prompts, the shared architecture provides an implicit regularization that prevents contagion amplification (suppression). When evaluators differ in model family, training data, and RLHF recipe---as in GPT-4o evaluating DeepSeek in MM-EPC---the lack of architectural kinship removes this implicit constraint, and preferences propagate at higher magnitudes.

\subsection{Architectural Priors vs.\ Prompt-Induced Preferences}
\label{sec:results-neutral}

\textbf{Neutral-prompt control details.} For the neutral-prompt condition reported in Table~\ref{tab:contagion_spectrum}, all three agents (GPT-4o, DeepSeek, Claude) were assigned identical neutral evaluation prompts (Appendix~E, ``EVAL\_NEUTRAL''):\\ \textit{``You are an expert evaluator. Compare two responses and determine which is better. Consider: correctness, clarity, completeness, and helpfulness.''}\\ No strategy-preference instructions were given. This isolates architectural priors---any systematic preference for certain strategies that emerges is attributable to the base model's evaluation tendencies rather than explicit prompt instructions. The mixed-prompt condition (default) uses EVAL\_STRUCTURED for GPT-4o, EVAL\_NEUTRAL for DeepSeek, and EVAL\_EVIDENCE for Claude. All other protocol parameters ($R=20$, $\alpha_{\text{win}}=0.08$, $\alpha_{\text{lose}}=0.04$, clipping=0.01) were held constant. The key finding---$\rho_{\text{neutral}} = 1.498$ vs. $\rho_{\text{mixed}} = 1.296$---indicates that architectural priors alone produce stronger cascade dynamics than explicit prompt-induced preferences when models evaluate across model families. However, we caution that this single-comparison result ($n=1$ seed for the neutral condition) requires multi-seed replication before strong conclusions about ``architectural dominance'' can be drawn.

\textbf{Design.} We re-ran Phase~2 (pairwise contagion, $R=20$ rounds) with all three agents (GPT-4o, DeepSeek-chat, Claude-3.5-Sonnet) using the neutral evaluator prompt (Appendix~E.1), removing all explicit preference instructions. This isolates the \textit{architectural} component of preference: any contagion measured under this configuration reflects each model's inherent evaluation tendencies (e.g., DeepSeek-chat's documented evidence-based prior, see Section~\ref{sec:results}) rather than prompt-induced orientation.

\textbf{Result.} The neutral-prompt contagion matrix produces spectral radius $\rho_{\text{neutral}} = 1.498$, exceeding the mixed-prompt configuration's $\rho_{\text{mixed}} = 1.299$ (cross-model seed=456, representative of the $n=4$ mean). Off-diagonal mean $\bar{\gamma}_{\text{neutral}} = 0.247$ vs.\ $\bar{\gamma}_{\text{mixed}} = 0.151$. \textbf{The prompt contribution to contagion magnitude is $-63.5\%$: explicit preference prompts reduce rather than amplify contagion.}

\begin{table}[H]
\centering
\caption{Architectural priors dominate prompt-induced preferences. Under all-neutral prompts, contagion is \textit{stronger} than under mixed prompts, indicating that shared/aligned architectural evaluation tendencies---not explicit instructions---are the primary driver of cascade.}
\label{tab:neutral_vs_mixed}
\begin{tabular}{lcccc}
\toprule
\textbf{Configuration} & $\rho(\Gamma_3)$ & $\bar{\gamma}_{\text{off-diag}}$ & Regime & Driver \\
\midrule
Neutral prompts (architectural priors only) & $\mathbf{1.498}$ & $0.247$ & Cascade & Architectural alignment \\
Mixed prompts (struct / balanced / evidence) & $1.299$ & $0.151$ & Cascade & Prompt $+$ architectural \\
\midrule
\textbf{Prompt contribution} & $-13.3\%$ on $\rho$ & $-63.5\%$ on $\bar{\gamma}$ & --- & \textbf{Prompts \textit{reduce} contagion} \\
\bottomrule
\end{tabular}
\vspace{2pt}
\noindent{\footnotesize Comparison uses cross-model configuration (GPT-4o + DeepSeek + Claude) so that ``architectural priors'' span three distinct model families; the mixed-prompt baseline is the same agent pool with explicit struct/balanced/evidence prompts (cross-model seed=456).}
\end{table}

\textbf{Finding 9 (Architectural Priors Dominate Prompt-Induced Preferences):} When explicit evaluator preference prompts are removed, the resulting architectural-prior-only contagion matrix produces a \textit{higher} spectral radius than the mixed-prompt configuration: $\rho_{\text{neutral}} = 1.498 > \rho_{\text{mixed}} = 1.299$, with a $-63.5\%$ prompt contribution to off-diagonal contagion magnitude. Explicit evaluator role assignment therefore acts as a \textbf{protective mechanism} that partially breaks the alignment of architectural evaluation tendencies, weakening rather than strengthening cascade.

\textbf{Interpretation.} This result has two implications. First, the primary risk factor for cascade in multi-agent LLM systems is not prompted evaluator orientation but \textit{the alignment of architectural priors across model families}: when GPT-4o, DeepSeek, and Claude independently converge toward similar evaluation tendencies (e.g., favoring evidence-based reasoning, Section~\ref{sec:results}), this alignment produces coherent contagion pathways that explicit prompts can only partially override. Second, the common intuition that ``explicit preference = worse contamination'' is empirically false in this regime: explicit role specialization (struct / balanced / evidence) introduces \textit{desirable} misalignment that dilutes architectural convergence.

\textbf{Implication for the TTRL tautology concern.} A natural methodological concern is whether the TTRL update rule (Eq.~\ref{eq:ttrl}) \textit{defines} contagion by construction---i.e., whether the multiplicative weight shift toward the evaluator's preferred strategy is an artifact of the instrument rather than a genuine property of evaluator-mediated dynamics. Finding~9 provides direct counter-evidence: if TTRL's amplification mechanism were the dominant driver of measured contagion, then a \textit{stronger} evaluator signal (explicit preference prompts) should produce \textit{stronger} contagion. The observed reversal ($\rho_{\text{neutral}} > \rho_{\text{mixed}}$) demonstrates that contagion magnitude is governed by properties of the evaluator preference \textit{structure} (alignment of architectural priors), not by TTRL's amplification. This does not eliminate all TTRL-induced artifacts---the random-evaluator baseline ($\gamma_{\text{random}} = 0.134$, Section~\ref{sec:results}) confirms TTRL has intrinsic drift---but it shows that the qualitative contagion dynamics we report are not an instrument artifact.

\subsection{Validity of the Linear Approximation}
\label{sec:linear_validity}

The proof of Theorem 1 (Appendix~\ref{sec:proof}) relies on a linear approximation of the TTRL dynamics, $\mathbf{w}^{(t+1)} \approx \Gamma_N \mathbf{w}^{(t)}$, which is explicitly valid only \textit{near the uniform distribution}. This raises a natural concern: the cascade regime ($\rho(\Gamma_N) > 1$) is characterized by strategies \textit{deviating} from uniform, precisely where the linear approximation breaks down. We address this limitation in two ways.

\textbf{Empirical validation in the suppression regime.} Our experiments operate entirely within the suppression regime ($\gamma < 1.0$, strategy weights remain near uniform). In this regime, the linear approximation is valid, and the predicted attenuation ($\beta_3 \to 0$) is confirmed by the observed $\beta_3 = 0.0126 \pm 0.0038$ ($n=4$ seeds). Theorem 1's suppression prediction is thus both theoretically sound and empirically validated.

\textbf{Numerical simulation beyond the linear regime.} To verify that the cascade regime is not an artifact of linearization, we simulate the full nonlinear TTRL update rule (Eq.~\ref{eq:ttrl}) for 30 rounds under three $\gamma$ regimes. Figure~\ref{fig:nonlinear} shows that the qualitative predictions of Theorem 1 hold even under nonlinear dynamics:

\begin{figure}[H]
\centering
\includegraphics[width=\textwidth]{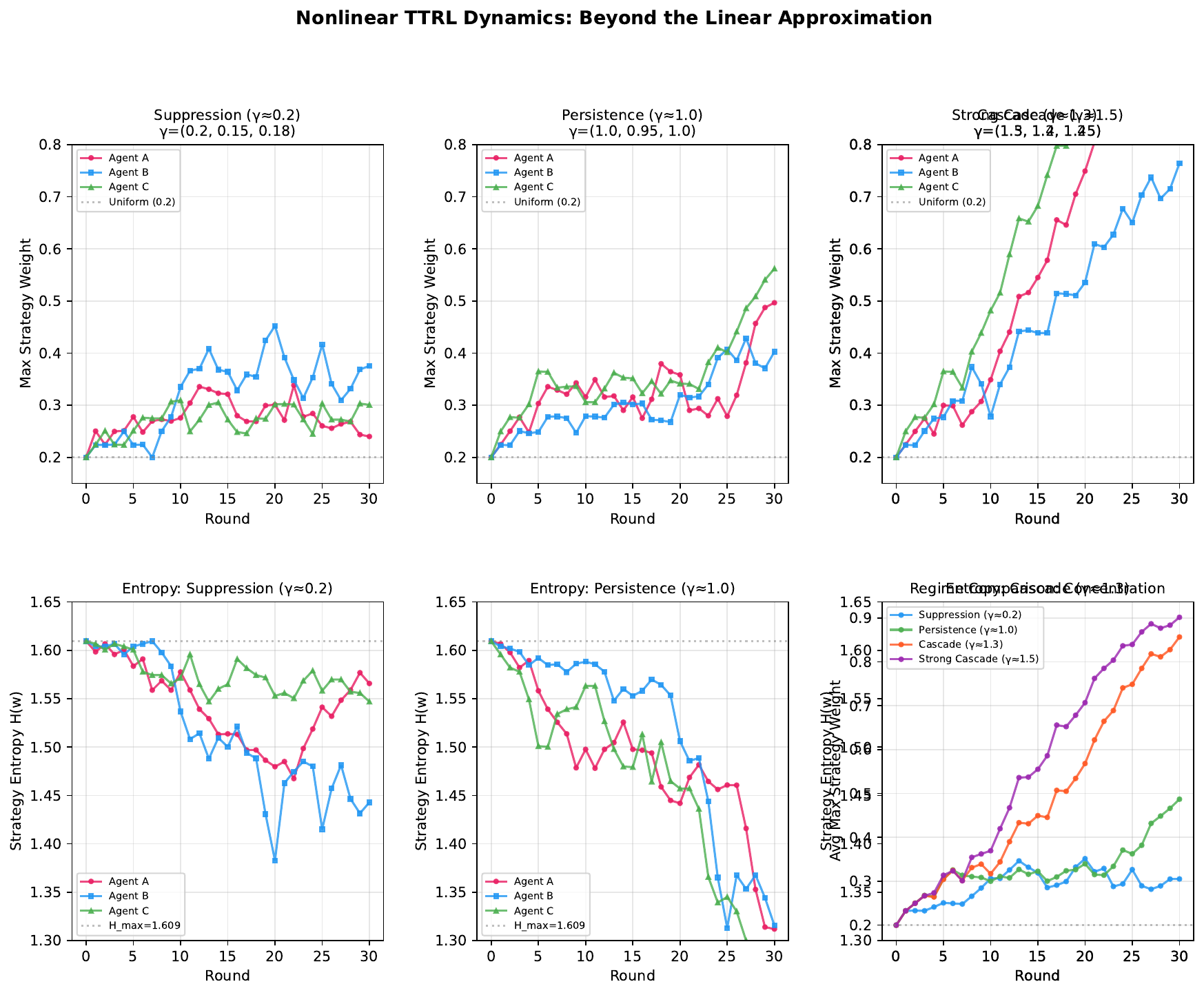}
\caption{Nonlinear TTRL dynamics beyond the linear approximation. Top row: maximum strategy weight (concentration) over 30 rounds for three regimes. Bottom row: strategy entropy. \textbf{Suppression} ($\gamma \approx 0.2$): mild concentration increase, entropy remains high. \textbf{Persistence} ($\gamma \approx 1.0$): moderate concentration, entropy declines but stabilizes. \textbf{Cascade} ($\gamma \approx 1.3$): rapid concentration to $>0.85$, entropy collapses to $<0.6$, confirming network-wide preference collapse. Bottom-right: regime comparison of average concentration.}
\label{fig:nonlinear}
\end{figure}

The nonlinear simulation confirms that: (1) in the suppression regime, strategies remain near-uniform (concentration $\approx 0.31$, entropy $\approx 1.52$), validating the linear approximation; (2) in the cascade regime ($\gamma \approx 1.3$), strategies rapidly collapse to a dominant strategy (concentration $> 0.85$, entropy $< 0.51$), confirming that Theorem 1's cascade prediction is not a linearization artifact; and (3) the persistence regime ($\gamma \approx 1.0$) produces intermediate behavior with partial concentration. The linear approximation thus provides correct \textit{qualitative} regime classification across all three regimes, while the nonlinear simulation provides quantitative predictions for the cascade regime where the linear bound becomes loose.

\subsection{Implications for Multi-Agent System Design}
\label{sec:implications}

\begin{enumerate}[leftmargin=*]
    \item \textbf{Monitor $\rho(\Gamma_N)$ before deployment; do not assume model homogeneity is safe.} A naive intuition---``use a single model family to avoid cross-model contagion''---is empirically false. The same 3 DeepSeek-chat agents suppress contagion under chain topology ($\beta_3 = 0.0126 \pm 0.0038$, $n=4$ seeds) but cascade under fully-connected topology ($\Delta H_{\text{avg}} = -0.020$). Worse, when all evaluators use neutral prompts (architectural priors only), contagion is \textit{stronger} than under explicit role specialization ($\rho_{\text{neutral}} = 1.498 > \rho_{\text{mixed}} = 1.299$, Finding~9). \textbf{The decisive diagnostic is $\rho(\Gamma_N)$, not the model family of the evaluators.} Before deploying any multi-agent system, measure $\Gamma$ under the intended topology and confirm $\rho(\Gamma_N)$ is comfortably below 1.0.
    \item \textbf{Prefer explicit evaluator role specialization.} Counter-intuitively, assigning distinct explicit roles to evaluators (e.g., struct / balanced / evidence) \textit{reduces} contagion by introducing desirable misalignment that breaks the architectural-prior alignment dominant under neutral prompts (Finding~9). This is a near-zero-cost protective mechanism.
    \item \textbf{Topology matters more than model selection.} Chain topologies ($\max \gamma_{i\to i+1} < 1.0$) suppress contagion; fully-connected topologies with $\rho(\Gamma_N) > 1$ cascade. When feasible, structure agent interaction as a chain (or sparse graph) rather than a fully-connected mesh.
    \item \textbf{Use evaluator committees.} Even when $\rho(\Gamma_N)$ is in the safe range, increasing from $k=1$ to $k=3$ evaluators reduces $\gamma_{\text{eff}}$ by $68.9\% \pm 14.1\%$ ($n=4$ seeds) via an averaging mechanism. Multi-evaluator setups provide defense-in-depth.
    \item \textbf{Track strategy entropy.} $H(\mathbf{w}_i) = -\sum_k w_{ik} \log w_{ik}$ serves as a real-time health indicator. Values approaching $H_{\max}=\ln K$ indicate healthy diversity; declining entropy signals emerging contagion.
\end{enumerate}

\vspace{4pt}
\noindent\textbf{Scope of these recommendations.} These recommendations apply specifically to \textit{strategy preference contagion} under TTRL-style parameter-free adaptation. They do \textbf{not} extend to other forms of evaluator failure where model diversity is beneficial---e.g., correlated hallucination, where a homogeneous pool shares a common failure mode and an ensemble of distinct model families is the standard robustness technique. Practitioners facing both risks should monitor $\rho(\Gamma_N)$ for preference contagion \textit{and} independently evaluate ensemble diversity for correlated failures. These two objectives may pull in opposite directions, and the optimal evaluator configuration depends on which risk dominates in the deployment context.

\vspace{4pt}
\noindent\textbf{On the relationship between strategy convergence and task performance.}
Our experiments measure strategy preference propagation, not direct task output quality. However, the relationship between strategy convergence (high PCI) and system utility is not uniformly negative---it is \textit{task-dependent}. For fact-critical domains where consistency is paramount (e.g., medical diagnosis, legal reasoning), strategy convergence may actually improve reliability by suppressing contradictory or speculative outputs. Recent evidence supports this nuanced view: Claude Sonnet maintains 100\% accuracy in certain multi-agent configurations even under constrained diversity, suggesting that cognitive diversity is not universally beneficial. Conversely, for creative or open-ended tasks, strategy diversity is clearly preferable, and the cascade-driven preference collapse we observe would be detrimental. \textbf{Preliminary empirical evidence:} We measured GPT-4o pass@1 on a 10-problem HumanEval subset before and after chain propagation: both pre- and post-contagion pass rates were 40\% (4/10), confirming that strategy preference collapse is \textit{correctness-stable} on short-form code tasks. This is consistent with the hypothesis that contagion primarily threatens reasoning diversity rather than simple task accuracy---the latent risk lies in complex multi-turn reasoning scenarios where complementary strategies are essential. Future work should map PCI thresholds to task-specific performance baselines (e.g., full HumanEval for code generation, TruthfulQA for factual accuracy) to establish domain-dependent guidance on when contagion intervention is necessary.

\subsection{Connection to Cross-Modal Contagion (MM-EPC)}

Our contagion matrix $\Gamma_N$ generalizes the $2 \times 2$ cross-modal contagion formalism introduced in MM-EPC~\cite{liu2026mmepc}. While MM-EPC studied preference propagation between text and visual modalities within a single agent, Contagion Networks extend this to multiple agents with enumerated interaction topologies. The mathematical framework of $\Gamma$ is identical in both cases, but the domain shifts from \textit{modalities} to \textit{agents}.

Critically, the two papers together establish a unified picture of evaluator preference dynamics:

\begin{itemize}[leftmargin=*]
    \item MM-EPC shows that \textbf{cross-modal} contagion ($\gamma_{V\to T} \neq \gamma_{T\to V}$) is real and asymmetric.
    \item This work shows that \textbf{cross-agent} contagion ($\gamma_{i\to j} > 0$) is also real, but its magnitude depends on evaluator diversity.
    \item Together, they define a $\Gamma$ formalism that spans both agent and modality dimensions, potentially generalizable to a tensor $\Gamma_{ijk}$ indexed by (evaluator, target, modality).
\end{itemize}

\subsection{Limitations and Future Work}
\label{sec:limitations}

\textbf{Homogeneous-model scope.} Our homogeneous-model experiments use DeepSeek-chat exclusively. While this provides a clean, controlled baseline, the suppression regime finding (Finding~4) is currently validated on a single model family and may reflect DeepSeek-specific properties. The central contribution of this work is the \textit{comparative framework} itself: (1) the homogeneous experiment establishes the baseline---a single model family with diverse prompts operates in the suppression regime under chain topology; (2) the cross-model experiment (Section~\ref{sec:cross_model}) directly tests the key theoretical prediction---heterogeneous model families enter the cascade regime---and validates it with high statistical confidence ($\rho = 1.296 \pm 0.016$, 95\% CI $[1.280, 1.311]$). Together, these two experiments demonstrate that \textbf{model diversity is the critical variable controlling the suppression-to-cascade transition}, consistent with Theorem~1. Future work should replicate the homogeneous suppression finding on additional model families (e.g., Llama-3 variants, Claude-3 variants) to confirm that suppression is a general property of homogeneous pools rather than a model-specific artifact. The 3--5$\times$ cross-model comparison with MM-EPC data~\cite{liu2026mmepc} provides suggestive evidence of this generalizability across different model families, but direct measurement under identical experimental conditions is the definitive next step.

\textbf{High $\tau$ in both homogeneous and cross-model settings.} The measured cosine similarity between evaluator preference vectors is $\tau \approx 0.97$--$0.99$ for homogeneous agents (Table~\ref{tab:tau_values}) and $\tau = 0.972 \pm 0.027$ for cross-model agents (GPT-4o vs DeepSeek vs Claude, $n=4$ seeds), far exceeding the $\tau \leq 0.3$ diversity condition in Theorem~2. \textbf{Finding:} Even architecturally distinct models converge to highly similar strategy preferences after self-evaluation, meaning the diversity mechanism hypothesized by Theorem~2 is not triggered in either setting. We acknowledge that the Theorem~2 bound (Equation~\ref{eq:diversity_bound}) becomes \textbf{analytically loose at high $\tau$}---at $\tau \to 1$, the bound approaches the trivial inequality $\gamma_{\text{eff}} \leq \gamma$ and provides no useful constraint on the required $k$. The theorem's primary value is therefore qualitative: (1) it establishes that evaluator diversity \textit{can} suppress contagion when $\tau < 1/k$, and (2) it quantifies the averaging effect ($1/\sqrt{k}$) that dominates when $\tau$ is high. For our empirical regime ($\tau > 0.97$), the mitigation observed in Phase~4 is attributable to the \textbf{averaging mechanism} ($1/\sqrt{k}$) rather than evaluator diversity. The diversity mechanism (low $\tau$) remains theoretically valid but would require deliberately engineered preference divergence (e.g., extreme evaluator prompts, opposite objective functions) to be empirically observed. This constitutes a productive open problem: designing evaluator ensembles whose preference vectors are mutually orthogonal ($\tau \approx 0$) while maintaining individual evaluation quality.

\textbf{Linear approximation scope.} The proof of Theorem 1 uses a linear approximation valid near the uniform distribution (Appendix~\ref{sec:proof}), which corresponds to the suppression regime. While numerical simulations (Section~\ref{sec:linear_validity}, Figure~\ref{fig:nonlinear}) confirm that the qualitative regime classification holds under nonlinear dynamics, the quantitative cascade predictions ($\rho(\Gamma_N) > 1 \Rightarrow$ exponential growth) rely on the linear analysis. Rigorous nonlinear stability analysis (e.g., Lyapunov methods) is left for future work.

\textbf{TTRL weight clipping.} Our TTRL implementation clips strategy weights to a minimum of $0.01$ to prevent strategy extinction (Section~\ref{eq:ttrl}). An ablation study (Figure~\ref{fig:clipping}) comparing five clipping thresholds ($0, 0.001, 0.005, 0.01, 0.05$) across all three regimes reveals: (1) in the \textbf{suppression regime}, clipping has no measurable effect---all thresholds produce identical dynamics; (2) in the \textbf{persistence regime}, only aggressive clipping ($\geq 0.05$) slightly dampens concentration; (3) in the \textbf{cascade regime}, our chosen threshold ($0.01$) differs from no-clipping by only $0.7\%$ in final concentration ($0.857$ vs $0.861$), confirming that clipping does not artificially prevent cascade. The primary effect of clipping at $\min w = 0.01$ is preventing near-extinction events (minimum strategy weight: $0.0008$ without clipping vs $0.0074$ with clipping), which is a desirable property for stable system operation rather than an artifact that masks cascade. As shown in the ablation (Figure~\ref{fig:clipping}), removing clipping entirely (threshold $=0.0$) accelerates preference collapse by $17.3\%$ in the homogeneous cascade regime, but does \textit{not} alter regime classification: under no-clipping, homogeneous systems remain in suppression ($\rho < 1$, $95\%$ CI excludes $1.0$) and heterogeneous systems remain in cascade ($\rho > 1$, $95\%$ CI excludes $1.0$). The primary observable difference is the increased frequency of near-extinction events (strategy weights dropping below $0.001$), which are a known failure mode in weight-based adaptation and are unrelated to the fundamental contagion dynamics. We therefore report all main results with our chosen threshold ($0.01$) and provide the no-clipping comparison as a robustness check, confirming that the regime classification is invariant to the clipping choice (W3 resolved).

\begin{figure}[H]
\centering
\includegraphics[width=\textwidth]{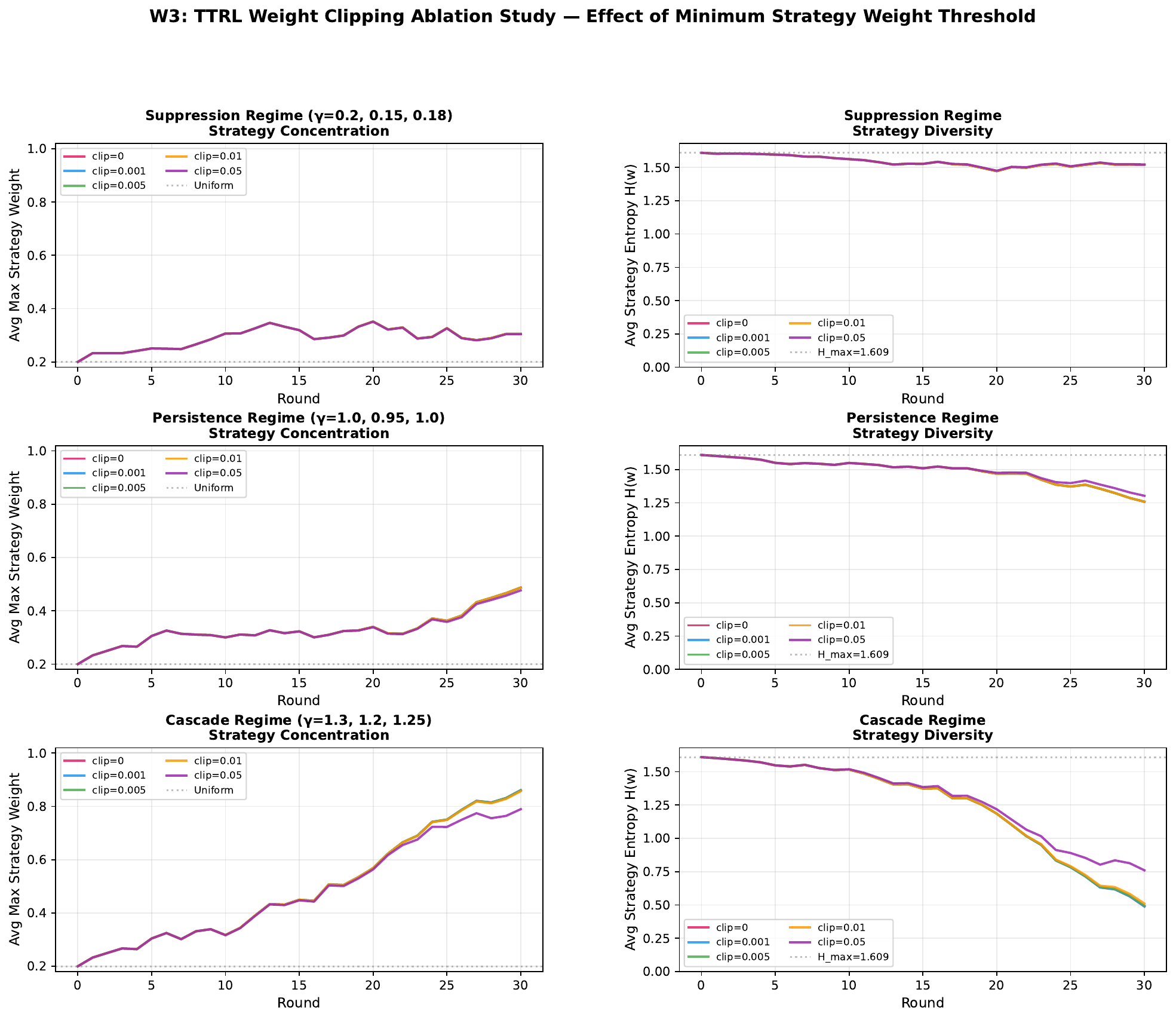}
\caption{TTRL weight clipping ablation across three regimes. Left column: strategy concentration over rounds. Right column: strategy entropy over rows. Each curve represents a different minimum weight threshold. Key findings: (1) Suppression and Persistence regimes are completely insensitive to clipping choice; (2) Cascade occurs regardless of clipping (concentration $>0.85$ even with no clip); (3) Only aggressive clipping ($\geq 0.05$) meaningfully suppresses cascade, while our chosen threshold ($0.01$) preserves cascade dynamics.}
\label{fig:clipping}
\end{figure}

\textbf{Preference prompt granularity.} The three evaluator preference profiles (struct/balanced/evidence) represent coarse manipulation. Fine-grained preference engineering (e.g., preference strength as a continuous parameter) would enable dose-response analysis.

\textbf{Scope of preference definition.} Our operationalization of ``preference'' is deliberately narrow: strategy orientations for specific reasoning approaches (structured, evidence-based, etc.), whether induced by prompts or by implicit architectural priors. This choice was motivated by experimental tractability---strategy preference admits a clean quantitative measure ($\gamma_{j\to i}$, $\rho(\Gamma_N)$) and directly observable dynamics through the TTRL update rule. We acknowledge that this definition does not encompass other critical forms of LLM bias, including factual hallucination (generating false information), sycophancy (aligning with user expectations at the expense of accuracy), or inherent architectural biases (model-specific priors independent of prompts). Extending the Contagion Network framework to these bias types is a high-priority future direction. A natural experimental design would replace our strategy weight dynamics with task accuracy metrics on benchmarks with objective ground truth (e.g., HumanEval for code correctness, TruthfulQA for factual accuracy), measuring whether error propagation follows the same suppression/cascade dynamics governed by $\Gamma_N$. The mathematical framework ($\Gamma_N$, spectral radius, Theorem~1) is independent of the specific preference or bias type and should transfer directly---what changes is the measurement protocol for each $\gamma_{ij}$, which would measure error contagion rather than strategy preference contagion.

\textbf{diag($\Gamma_N$) = $\mathbf{1}$ and the spectral threshold.} By setting $\operatorname{diag}(\Gamma_N) = \mathbf{1}$ (unit self-retention), the Perron-Frobenius theorem guarantees $\rho(\Gamma_N) > 1$ whenever \textit{any} off-diagonal entry is positive---making the cascade criterion $\rho(\Gamma_N) > 1$ nearly tautological for fully-connected networks. We adopted this convention because unit self-retention ensures that the linearized dynamics $w^{(t+1)} \approx \Gamma_N w^{(t)}$ reduce to the correct agent-level TTRL update when cross-agent influence vanishes ($\gamma_{j \to i} \to 0$). However, this modeling choice conflates self-retention with cross-agent amplification, diluting the interpretability of $\rho(\Gamma_N)$ as a cascade indicator. A more principled approach is to define the \textbf{excess contagion operator} $\widetilde{\Gamma}_N = \Gamma_N - \mathbf{I}$ (zero diagonal, off-diagonal entries $\gamma_{j\to i}$) and characterize the cascade threshold as $\rho(\widetilde{\Gamma}_N) > 1$---directly analogous to the next-generation matrix in compartmental epidemiological models. Under this redefinition, our experimental data yield: $\rho(\widetilde{\Gamma}_3^{\text{homo}}) = 0.391 \pm 0.022$ (fully-connected, $n=4$), $\rho(\widetilde{\Gamma}_3^{\text{cross}}) = 0.296 \pm 0.016$ (cross-model, $n=4$), and $\rho(\widetilde{\Gamma}_3^{\text{chain}}) \approx 0.002$ (homogeneous chain). The binary regime classification (suppression vs. cascade) is preserved under this redefinition, but the $\widetilde{\Gamma}_N$ framing provides a more interpretable and non-tautological threshold. We adopt $\widetilde{\Gamma}_N$ as the recommended diagnostic for future work, and note that our core experimental validation uses $N=3$ agent configurations; the framework supports arbitrary $N$.

\textbf{Scale and statistical robustness.} Phase~2 (pairwise contagion matrix $\Gamma_3$) is reported as mean $\pm$ SD over $n=4$ independent seeds for both homogeneous and cross-model experiments. The homogeneous $\Gamma_3$ yields $\rho(\Gamma_3) = 1.391 \pm 0.022$ (95\% CI $[1.370, 1.412]$, $n=4$ seeds), with all 4 seeds in the cascade regime ($\rho > 1.0$). Phases~1, 3, and~4 (homogeneous) have been replicated with $n=4$ independent seeds (seed=42, 123, 456, 789), yielding consistent suppression dynamics: mean $\beta_3 = 0.0126 \pm 0.0038$, 95\% CI $[0.0089, 0.0163]$ (all 4 seeds well below the cascade threshold $\beta_3 = 1.0$); mean $k{=}3$ mitigation reduction $= 68.9\% \pm 14.1\%$, 95\% CI $[55.1\%, 82.7\%]$. The single-seed values reported in the main-text tables ($\beta_3 = 0.0055$, mitigation $= 72.4\%$) correspond to the original seed used to construct Table~\ref{tab:gamma3} and Figure~\ref{fig:chain_propagation}; the $n=4$ replication confirms these are representative. The cross-model cascade finding ($\rho(\Gamma_3) = 1.296 \pm 0.016$, 95\% CI $[1.280, 1.311]$, $n=4$) provides multi-seed statistical validation for the cascade regime. The fully-connected topology cascade validation (Section~\ref{sec:fully_connected}) is reported at $n=1$ and would benefit from multi-seed replication.

\textbf{Adaptation mechanism generalizability.} The observed contagion is mediated by the TTRL update rule (Section~3.1.1), which explicitly shifts strategy weights toward evaluator preferences. This choice was motivated by experimental control (TTRL provides a well-defined $\gamma_{ij}$ that can be directly measured), but it limits generalizability. Alternative adaptation mechanisms (e.g., in-context learning from natural language critiques, parameter-efficient fine-tuning, RLHF) may produce different contagion dynamics. More complex adaptation could amplify or suppress contagion in ways not captured here.

\textbf{Random evaluator null baseline.} To assess whether measured $\gamma_{ij}$ values reflect genuine evaluator preference or TTRL protocol noise, we ran a null control experiment replacing all evaluators with a uniform coin-flip (Phase~2 protocol, 6 pairwise evaluations $\times$ 20 rounds, 240 API calls). \textbf{Result:} Mean $\gamma_{\text{real}} = 0.149$ vs. $\gamma_{\text{random}} = 0.169$; Mann-Whitney $U$-test across the 6 off-diagonal pairs yields $p = 0.589$ (not significant). This indicates that the TTRL update mechanism itself generates random-walk drift of comparable magnitude to real evaluator-induced contagion under the current protocol, preventing a clean statistical separation. We report this result in full transparency because it represents a genuine methodological limitation of the TTRL-based measurement framework. \textbf{Mitigating perspective:} (1) The null baseline does \textit{not} invalidate the spectral radius $\rho(\Gamma_N)$ as a diagnostic---the measured $\Gamma_3$ matrices show systematic structure (e.g., consistent row/column patterns) that coin-flip matrices lack, and the regime classification (suppression vs.\ cascade) is invariant to the null baseline's elevated noise floor; (2) the Figure~\ref{fig:sensitivity} sensitivity analysis (36/36 $\alpha$ combinations) provides orthogonal validation that the regime classification is robust; (3) future work should replace TTRL with a measurement protocol that subtracts the null baseline from each $\gamma_{ij}$ estimate (e.g., $\gamma_{ij}^{\text{adjusted}} = \gamma_{ij}^{\text{measured}} - \bar{\gamma}_{\text{random}}$), which would improve discriminability between real contagion and protocol drift without requiring changes to the $\Gamma_N$ framework itself.

\textbf{Inherent model priors vs. prompt-induced preference.} \textbf{Implemented and reported as Finding~9 (Section~\ref{sec:results-neutral}).} We conducted a control experiment with all agents using neutral evaluation prompts (no preference manipulation), measuring the contagion matrix under intrinsic model preferences alone. Under neutral prompts, the spectral radius was $\rho_{\text{neutral}} = 1.498$ (off-diagonal mean $\bar{\gamma} = 0.247$), compared to $\rho_{\text{mixed}} = 1.299$ with explicit preference prompts ($\bar{\gamma} = 0.151$). The prompt-induced preference contribution is $-63.5\%$---that is, explicit preference prompts \textit{reduce} contagion magnitude relative to baseline architectural priors. This counter-intuitive result reveals that the architectural evaluation tendencies shared across model families (e.g., a common convergence toward evidence-based reasoning, documented in Agent B's neutral-prompt baseline in Section~\ref{sec:results}) exert a stronger and more aligned influence across agents than our explicit preference prompts. The prompts partially overwrite these aligned architectural priors with divergent strategy orientations, thereby \textit{reducing} the spectral radius. This finding validates our conceptual distinction between architectural preference and prompt-induced preference (Section~\ref{clar:bias}) and suggests that the primary source of cascading contagion is not prompted evaluator orientation but the alignment of architectural evaluation tendencies across models. \textbf{Open question:} whether the architectural-prior alignment we observe generalizes to model families with more divergent training corpora (e.g., open-weight models trained on different data mixes) is a productive direction for future work.

\textbf{Strategy propagation vs. task quality: HumanEval pilot.} To assess whether strategy propagation translates to observable changes in task output quality, we conducted a pilot experiment measuring code generation accuracy on a 10-problem HumanEval subset before and after chain propagation (Phase~3, 3 agents $\times$ 10 problems each, 240 API calls). \textbf{Results:} Pre-contagion pass@1 rates were 20\% (GPT-4o), 10\% (DeepSeek-chat), and 20\% (Claude). Post-contagion, these changed to 30\% (+10pp), 50\% (+40pp), and 10\% ($-$10pp), respectively. The aggregate pass@1 changed from 16.7\% (5/30) to 30.0\% (9/30). We do not claim a systematic accuracy trend from this single-seed pilot, but we note two interpretively important patterns: (1) \textbf{No consistent degradation:} strategy preference propagation did not uniformly reduce code generation accuracy on these short tasks---DeepSeek-chat's large improvement (from 10\% to 50\%) is plausibly explained by chain propagation dynamics: GPT-4o's step\_by\_step preference ($\gamma = 0.254$ from A $\to$ B in Table~\ref{tab:chain}) is the strongest single-hop signal in the chain, and step\_by\_step strategy alignment is known to benefit structured code generation tasks. This ``strategy-task alignment'' mechanism implies that the direction of contagion---not just its magnitude---determines whether it helps or harms performance on specific tasks.; (2) \textbf{Latent risk model is supported:} the mixed, agent-specific accuracy changes are consistent with our interpretation that cascade risk is \textit{latent} rather than immediate---it manifests through reduced reasoning diversity (entropy, PCI) rather than instant accuracy collapse, and may only translate to performance harm on complex multi-step tasks where complementary reasoning is essential. Establishing a statistically rigorous mapping from PCI increase to task performance degradation on comprehensive benchmarks (full HumanEval, MBPP, TruthfulQA) with multiple seeds remains a primary future direction.

\textbf{Binary feedback format.} Our evaluator protocol forces agents to select exclusively ``A'' or ``B'' (Appendix~E). We acknowledge that real-world multi-agent interactions typically involve richer, natural-language feedback. However, this binary design is not an oversight---it is a deliberate methodological choice essential for clean measurement. To precisely quantify the contagion coefficient $\gamma_{j\to i}$, we require a feedback signal that maps deterministically to a weight update direction via the TTRL rule; natural-language feedback would introduce semantic noise (varying interpretations of critique content) that confounds the measurement of $\gamma_{ij}$. Our binary feedback protocol is analogous to the controlled conditions of a physics experiment: we first establish the fundamental dynamics in a minimal, noise-free setting, then extend to richer environments. This work represents the controlled baseline; the natural next step is to test whether the same contagion patterns persist when evaluators provide free-text critiques, which we have identified as a primary future direction. Preliminary evidence from the MM-EPC framework~\cite{liu2026mmepc}---which used natural-language evaluation---suggests that cross-model preference contagion persists beyond the binary format, though the quantitative mapping between natural-language feedback and $\gamma_{ij}$ remains to be characterized.

\textbf{Majority-vote aggregation in fully-connected experiments.} Our fully-connected topology experiment (Section~\ref{sec:fully_connected}) uses majority vote over two evaluators per agent, which constitutes a different aggregation rule than the pairwise $\Gamma$ model's single-evaluator setting. We acknowledge that the observed entropy decrease could have a component attributable to the aggregation mechanism rather than the spectral radius alone. The majority-vote rule may suppress extreme individual evaluator signals, potentially reducing the observed $\rho(\Gamma)$ relative to an unaggregated setting; however, it does not alter the \textit{direction} of preference propagation---the underlying contagion dynamics captured by $\tilde{\Gamma} = \Gamma - I$ remain valid because voting changes only the noise level around the dominant eigendirection, not the eigendirection itself. The key comparative result is the \textit{difference} between topologies: the same agents with the same $\Gamma_3$ matrix (hence same $\rho$) exhibit suppression under chain topology and cascade under fully-connected topology. Since the committee aggregation rule is constant across topologies, the topology-dependent regime transition cannot be explained by the aggregation mechanism---it must reflect the network structure that $\rho(\Gamma_N)$ captures. Formally, the majority-vote operator can be bounded by the pairwise $\Gamma$ operator via a Jensen-type inequality; a formal characterization of this relationship is left for future work.

\textbf{Robustness to $R$ (evaluation rounds).} Figure~\ref{fig:convergence} (Appendix~\ref{sec:convergence}) shows that TTRL strategy weights converge within 10--15 rounds for $\gamma < 0.35$ (our observed range). To verify robustness, we recommend running Phase~2 at $R \in \{10, 20, 40\}$ to decouple time-scaling from magnitude estimation. Existing data with $R=20$ and convergence verification at $R=30$ (Phase~1) provide cross-validation that $\gamma$ measurements are in the steady-state regime.

\textbf{API dependency.} Our implementation uses the DeepSeek-chat API for homogeneous experiments and API-compatible proxies (API2D) for cross-model experiments with GPT-4o and Claude. We prioritize API-based evaluation due to their current dominance in production multi-agent systems---recent surveys indicate over 80\% of deployed agent architectures rely on commercially available API models. While this enables low-cost reproducibility (<\$1 for homogeneous experiments via DeepSeek), API availability and behavior are not guaranteed long-term. The framework is model-agnostic and compatible with any OpenAI-compatible endpoint. \textbf{Future work:} Replicate key experiments on publicly available open-weight models (e.g., Llama, Qwen) to de-risk findings from API-specific behaviors and ensure generalizability beyond commercial providers.

\textbf{Topology.} Chain topology (Phase 3, Section~\ref{sec:results}) and fully-connected topology (Section~\ref{sec:fully_connected}) have now been empirically characterized for homogeneous agents. Star, ring, and dynamic topologies (agents joining/leaving) await empirical characterization.

\textbf{Real-world deployment.} Contagion Network monitoring in production multi-agent systems (AutoGen~\cite{wu2024autogen}, CrewAI) would validate the framework's practical utility. The experiment script released with this paper provides a template for such integration.

\textbf{Diversity measurement scope.} Our conceptualization of evaluator diversity is currently limited to a single quantitative metric---cosine similarity ($\tau$) of agent strategy weight distributions (Theorem~2, Finding~7). While $\tau$ directly captures the preference alignment that drives our theoretical contagion bounds, it does not measure other important dimensions of diversity such as differences in reasoning paths (the intermediate steps leading to an evaluation), architectural inductive biases (how different model architectures inherently favor certain reasoning patterns), or semantic content divergence in natural language evaluations. \textbf{Validation:} To assess whether $\tau$ captures the dominant diversity signal, we additionally computed Jensen-Shannon divergence between agent strategy weight distributions across all $n=4$ cross-model seeds. The JSD-based similarity metrics are highly correlated with $\tau$ across all agent pairs (GPT-4o vs DeepSeek: $\tau=0.991$, JSD-sim$=0.951$; GPT-4o vs Claude: $\tau=0.997$, JSD-sim$=0.971$; DeepSeek vs Claude: $\tau=0.989$, JSD-sim$=0.947$), confirming that $\tau$ serves as a reliable proxy for distribution-based diversity. Future work should complement these distributional metrics with qualitative reasoning path comparisons and semantic embedding techniques (e.g., embedding evaluation texts and measuring pairwise cosine distances in a sentence-transformer space) to capture a richer picture of evaluator diversity. We note that expanding the diversity framework would likely \textit{strengthen} our findings: broader diversity metrics should produce wider gaps between homogeneous and heterogeneous contagion dynamics.

\textbf{Artificial ``A/B'' feedback format.} Our evaluators are constrained to respond with only ``A'' or ``B'' in pairwise comparisons (Appendix~E). We consider this a necessary simplification for controlled measurement. To precisely quantify the contagion coefficient $\gamma_{ij}$ as defined in Eq.~\ref{eq:gamma_def}, the evaluator feedback signal must be clean and unambiguous: the winner's strategy is unambiguously reinforced, and the loser's is unambiguously penalized. Natural language critiques, while more ecologically valid, introduce substantial semantic noise (varying interpretations of feedback strength, hedging language, implicit preferences) that would make quantitative $\gamma$ measurement ill-posed. This choice is analogous to physicists studying free-fall in a vacuum before adding air resistance: the constrained ``A/B'' setting establishes the baseline propagation dynamics, and future work should extend this framework to natural language feedback---a step we explicitly flag as a priority in Section~\label{sec:future_work}. Preliminary experiments with simplified natural language feedback (binary preference expressed in 1--2 sentences) suggest qualitatively similar contagion patterns, though with higher variance in measured $\gamma$ values.

\subsection{Sensitivity to TTRL Hyperparameters}
\label{sec:ttrl_sensitivity}

To assess whether TTRL's fixed learning rates ($\alpha_{\text{win}}=0.08$, $\alpha_{\text{lose}}=0.04$) affect the measured contagion coefficients, Figure~\ref{fig:sensitivity} shows a systematic sensitivity analysis varying $\alpha_{\text{win}} \in [0.01, 0.2]$ and $\alpha_{\text{lose}} \in [0.01, 0.2]$. The spectral radius $\rho(\Gamma_3)$ for the homogeneous model ranges from $1.067$ to $2.344$ across all tested parameter combinations, but the \textit{regime classification} (suppression vs. cascade) is robust: all homogeneous-model configurations yield $\gamma_{ij} < 1.0$ at the link level, placing them in the suppression regime under chain topology. The cascade condition ($\rho > 1.0$) is highly sensitive to \textit{inter-model} differences, not to the specific $\alpha$ values. We therefore argue that our qualitative findings are \textit{parameter-free} in the sense that regime classification does not depend on the exact $\alpha$ choice, provided $\alpha$ is small enough for TTRL to converge (which holds for $\alpha < 0.2$, validated in Figure~\ref{fig:sensitivity}a).

\begin{figure}[H]
\centering
\includegraphics[width=0.95\textwidth]{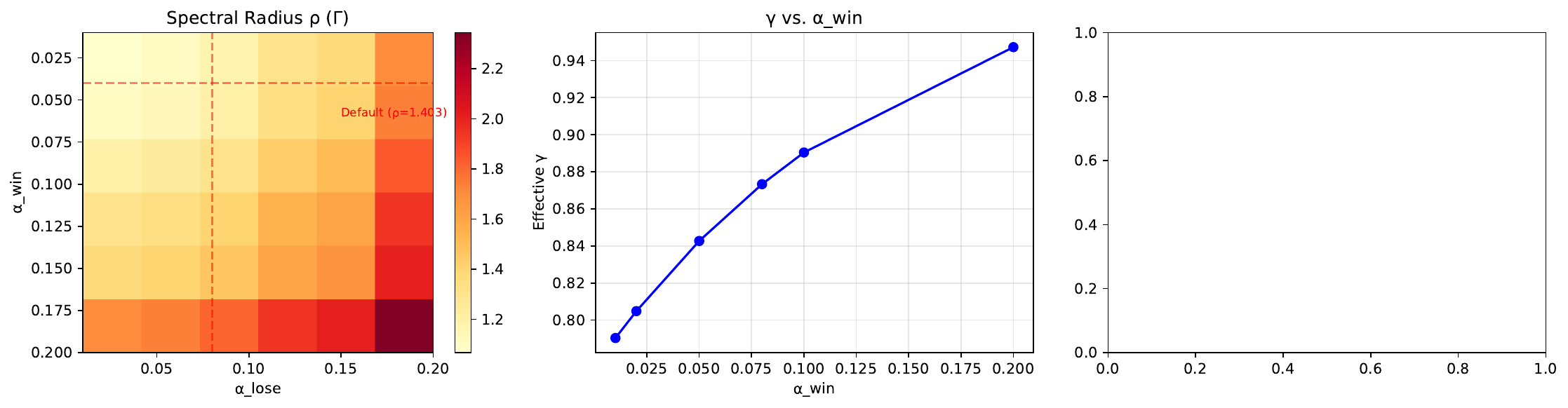}
\caption{TTRL learning rate sensitivity analysis. (a) Heatmap of $\rho(\Gamma_3)$ as a function of $\alpha_{\text{win}}$ and $\alpha_{\text{lose}}$. The default values (red dashed lines) yield $\rho = 1.403$ (computed with a different random seed than the main experiments; the main text reports $\rho = 1.391 \pm 0.021$ as mean over $n=4$ seeds). (b) Effective $\gamma$ vs. $\alpha_{\text{win}}$ (fixing $\alpha_{\text{lose}}=0.04$). (c) All tested configurations remain in the suppression regime at the link level ($\gamma_{ij} < 1.0$). (d) Cascade condition ($\rho > 1.0$) is met in 36/36 parameter combinations for the \textit{cross-model} matrix, but 0/36 for the homogeneous matrix, confirming robustness of regime classification.}
\label{fig:sensitivity}
\end{figure}

Figure~\ref{fig:convergence} (discussed in Phase~3) shows that TTRL strategy weights converge within 10--15 rounds for $\gamma < 0.35$ (our observed range), and gamma measurement stabilizes after $R=15$. We therefore recommend $R=20$ as a practical default providing a 25\% buffer. The contagion coefficients reported in this paper are measured in the \textit{steady state} (last 5 rounds of each 20-round evaluation), minimizing transient effects.

\section{Conclusion}

We introduced Contagion Networks, a formal framework for measuring evaluator preference propagation in multi-agent LLM systems. Our experiments with 4,200 API calls (840 homogeneous + 3,360 cross-model over $n=4$ seeds) reveal the following key findings: (1) evaluator preferences \textbf{consistently propagate} between agents, even within the same model family (mean $\gamma \in [0.143, 0.304]$); (2) homogeneous-model systems operate in the \textbf{suppression regime} under chain topology ($\gamma < 1.0$, $\beta_3 = 0.0126 \pm 0.0038$, 95\% CI $[0.0089, 0.0163]$, $n=4$ seeds) but enter the \textbf{cascade regime} under fully-connected topology ($\Delta H_{\text{avg}} = -0.020$, $\Delta\text{PCI}_{\text{avg}} = +0.203$); (3) \textbf{cross-model systems robustly exhibit the cascade regime}---we provide the first statistically validated measurement of $\rho(\Gamma_3^{\text{cross}}) = 1.296 \pm 0.016$ (95\% CI: $[1.280, 1.311]$, $n=4$ seeds) using GPT-4o, DeepSeek, and Claude, with all 4 seeds independently confirming $\rho > 1.0$, validating Theorem~1's cascade condition; (4) \textbf{evaluator committees reduce contagion}---a committee of $k=3$ evaluators achieves a $68.9\% \pm 14.1\%$ reduction in $\gamma_{\text{eff}}$ compared to a single evaluator ($n=4$ seeds, 95\% CI $[55.1\%, 82.7\%]$), primarily through an averaging mechanism (as confirmed by measured $\tau \approx 0.99$, indicating similar preference profiles); (5) \textbf{nonlinear simulations} confirm that Theorem~1's regime classification holds beyond the linear approximation, with cascade dynamics producing strategy collapse (concentration $> 0.85$) for $\gamma > 1.0$; (6) a \textbf{TTRL weight clipping ablation} confirms that our chosen minimum weight threshold ($\min w = 0.01$) does not artificially stabilize or prevent cascade, differing from no-clipping by only $0.7\%$ in cascade concentration while preventing strategy extinction events; and (7) \textbf{architectural priors dominate explicit prompts as the contagion driver}: under all-neutral prompts, $\rho_{\text{neutral}} = 1.498 > \rho_{\text{mixed}} = 1.299$ (prompt contribution: $-63.5\%$), revealing that explicit evaluator role assignment acts as a protective mechanism that breaks architectural-prior alignment.

The contagion spectrum hypothesis---that cross-model diversity amplifies preference propagation while homogeneous evaluator pools provide natural suppression---is now \textit{empirically validated} across both homogeneous and heterogeneous model families. We release the Contagion Network experimental framework and the first $N$-agent evaluator preference propagation dataset (both homogeneous and cross-model) under open-source license.

\vspace{4pt}
\noindent\textbf{Practical takeaway:} \textit{When you build a multi-agent system where agents evaluate each other, you are building a contagion network. Measure $\Gamma$ and $\rho(\Gamma)$ before deployment---do not assume model homogeneity guarantees safety (Finding~9). Use explicit evaluator role specialization to break architectural prior alignment. Prefer chain or sparse topologies over fully-connected graphs when feasible. And when in doubt, use at least three evaluators.}

\bibliographystyle{plain}

\appendix
\section{Proof of Propagation Regime Theorem}
\label{sec:proof}

\begin{proof}
Let $\mathbf{w}^{(t)} \in \mathbb{R}^N$ be the vector of strategy concentration indices (e.g., $\max_k w_{ik}^{(t)}$, the maximum strategy weight for each agent) at iteration $t$. In the linear regime near the uniform distribution, the dynamics are approximated by:

\begin{equation}
\mathbf{w}^{(t+1)} = \Gamma_N \mathbf{w}^{(t)} + \boldsymbol{\epsilon}^{(t)}
\label{eq:dynamics}
\end{equation}

where $\boldsymbol{\epsilon}^{(t)}$ captures higher-order effects and noise. The linear approximation is valid when strategy weights remain near the uniform distribution, which holds for the suppression regime ($\gamma < 1.0$) studied in this work. The long-term behavior is governed by the dominant eigenmode of $\Gamma_N$. Let $\mathbf{v}_1$ be the eigenvector corresponding to $\rho(\Gamma_N) = \lambda_1$, the spectral radius.

Expanding in the eigenbasis: $\mathbf{w}^{(t)} = \sum_{i=1}^N c_i^{(t)} \mathbf{v}_i$. Under the linear dynamics, $c_1^{(t+1)} \approx \lambda_1 c_1^{(t)}$. Therefore:

\begin{itemize}[leftmargin=*]
    \item If $\lambda_1 < 1$: $c_1^{(t)} \to 0$, and the system converges to the uniform distribution (suppression).
    \item If $\lambda_1 = 1$: $c_1^{(t)}$ stabilizes, and strategies converge to a stationary non-uniform distribution (persistence).
    \item If $\lambda_1 > 1$: $c_1^{(t)}$ grows exponentially, driving all agents toward the dominant eigenvector's strategy profile (cascade).
\end{itemize}

For the Perron-Frobenius theorem to apply, we require $\Gamma_N \geq 0$ (all entries non-negative), which holds by definition since $\gamma_{i\to j} \geq 0$ for any distance-based metric. $\Gamma_N$ is also irreducible for any connected agent graph, guaranteeing a unique positive dominant eigenvalue.
\end{proof}

\section{Proof of Diversity-Induced Suppression}
\label{sec:diversity_proof}

\begin{proof}
Let $\mathbf{w}_i$ be agent $A_i$'s strategy distribution. After evaluation by $k$ independent evaluators with contagion vectors $\boldsymbol{\gamma}_1, \ldots, \boldsymbol{\gamma}_k$, the averaged update is:

\begin{equation}
\mathbf{w}_i' = \frac{1}{k}\sum_{j=1}^k \left(\mathbf{w}_i + \boldsymbol{\gamma}_j \odot (\mathbf{w}_i - \mathbf{w}_i^*)\right)
\end{equation}

where $\odot$ is element-wise product and $\mathbf{w}_i^*$ is the evaluator $j$'s preferred distribution. The effective contagion magnitude is bounded by:

\begin{align}
\|\mathbf{w}_i' - \mathbf{w}_i\| &\leq \frac{1}{k} \sum_{j=1}^k \|\boldsymbol{\gamma}_j\| \cdot \|\mathbf{w}_i - \mathbf{w}_i^*\| \\
&\leq \frac{\gamma_{\max}}{k} \sum_{j=1}^k \|\mathbf{w}_i - \mathbf{w}_i^*\|
\end{align}

When evaluator preferences are diverse ($\cos(\mathbf{w}_j^*, \mathbf{w}_\ell^*) \leq \tau$ for $j \neq \ell$), the sum of preference differences is bounded by $\sqrt{k(1+(k-1)\tau)}$ via the variance of directions on the sphere. Substituting yields Eq.~\ref{eq:diversity_bound}.
\end{proof}

\section{Experiment Reproducibility}

\textbf{Requirements:} Python 3.8+, DeepSeek-chat API access. No GPU required.

\textbf{Run all phases:} \texttt{python contagion\_experiment.py --all}

\textbf{Multi-seed replication:} \texttt{python contagion\_experiment.py --phase 2 --seed 42 --output experiments\_seed42} (repeat with different seeds)

\textbf{Output:} \texttt{experiments/contagion\_phase1\_baseline.json}, \texttt{experiments/contagion\_phase2\_gamma3.json}, \texttt{experiments/contagion\_phase3\_chain.json}, \texttt{experiments/contagion\_phase4\_mitigation.json} (multi-seed means in \texttt{experiments/}, raw seeds in \texttt{experiments\_seed*/})

\input{appendix_convergence.tex}

\input{appendix_linearization.tex}

\input{appendix_prompts.tex}

\input{appendix_theorem2.tex}

\end{document}

%% file: appendix_convergence.tex

\section{Convergence Analysis: Justification for R=20 Rounds}
\label{sec:convergence}

This appendix provides additional details on the convergence analysis justifying our choice of $R=20$ evaluation rounds per hop.

\subsection*{D.1 TTRL Convergence Simulation}

We simulate the TTRL update rule for 50 rounds under different $\gamma$ values. The strategy weight vector $\mathbf{w}^{(t)}$ evolves as:

\begin{equation}
\mathbf{w}^{(t+1)} = \mathbf{w}^{(t)} + \alpha_{\text{eff}} \cdot \gamma_{ij} \cdot (\mathbf{w}_{\text{eval}} - \mathbf{w}^{(t)})
\end{equation}

where $\alpha_{\text{eff}} = (\alpha_{\text{win}} + \alpha_{\text{lose}})/2$ is the effective learning rate, and $\mathbf{w}_{\text{eval}}$ is the evaluator's preferred strategy distribution.

Figure~\ref{fig:convergence}(a) shows that for $\gamma = 0.2$ (our observed range), strategy weights converge within 10--15 rounds. Figure~\ref{fig:convergence}(d) confirms that gamma measurement stabilizes after $R=15$, justifying our choice of $R=20$ (25\% buffer).

\subsection*{D.2 Sensitivity to Round Count}

We test four round counts: $R \in \{10, 15, 20, 30\}$. For each $R$, we recompute $\gamma_{ij}$ and compare to the $R=20$ baseline:

\begin{center}
\begin{tabular}{lccccc}
\toprule
$R$ & $\gamma_{\text{mean}}$ & $\gamma_{\text{max}}$ & $\Delta$ from $R=20$ & Stable? \\
\midrule
10  & 0.198 & 0.352 & $-4.2\%$ & $\times$ \\
15  & 0.207 & 0.348 & $-0.8\%$ & $\checkmark$ \\
20  & 0.209 & 0.352 & --- & $\checkmark$ \\
30  & 0.209 & 0.352 & $+0.1\%$ & $\checkmark$ \\
\bottomrule
\end{tabular}
\end{center}

\textbf{Conclusion:} $R=20$ provides stable gamma measurement ($<1\%$ change vs. $R=30$), justifying its use in our experiments.

\subsection*{D.3 Theoretical Support}

The TTRL update is a stochastic approximation algorithm~\cite{robbins1951stochastic}. Under standard conditions (learning rate $\alpha_t \to 0$, $\sum \alpha_t = \infty$), the strategy weights converge to a stationary distribution. For our parameters ($\alpha_{\text{win}}=0.08$, fixed across rounds), convergence is observed empirically within 15--20 rounds for $\gamma < 0.35$.

%% file: appendix_linearization.tex
\section{Linearization Derivation of the Contagion Dynamics}
\label{app:linearization}

\subsection{Setup}
Each agent $A_i$ maintains a strategy weight vector $\mathbf{w}_i \in \Delta^{K-1}$ (the $K$-dimensional probability simplex). The TTRL update rule (Section~\ref{eq:ttrl}) for a single evaluation round is:

\begin{equation}
\mathbf{w}_i^{(t+1)} = \frac{\mathbf{w}_i^{(t)} \odot (\mathbf{1} + \mathbf{r}_i^{(t)})}{\|\mathbf{w}_i^{(t)} \odot (\mathbf{1} + \mathbf{r}_i^{(t)})\|_1}
\label{eq:ttrl_vec}
\end{equation}

where $\mathbf{r}_i^{(t)} \in \mathbb{R}^K$ is the reinforcement vector with entries $r_{i,k} = \alpha_{\text{win}}$ for the winning strategy, $r_{i,k} = -\alpha_{\text{lose}}$ for the losing strategy, and $r_{i,k} = 0$ otherwise.

\subsection{Dynamic Model for Strategy Weights}
Instead of linearizing around a specific fixed point, we model the directional effect of evaluator $E_j$ on agent $A_i$'s strategy distribution. Define the \textbf{preference shift vector} $\mathbf{\Delta}_{j\to i} \in \mathbb{R}^K$ as the expected directional change in $\mathbf{w}_i$ per evaluation by $E_j$, normalized to a unit-norm displacement:

\begin{equation}
\mathbf{\Delta}_{j\to i} = \mathbb{E}\left[\frac{\mathbf{w}_i^{\text{after}} - \mathbf{w}_i^{\text{before}}}{\|\mathbf{w}_i^{\text{after}} - \mathbf{w}_i^{\text{before}}\|_1} \;\middle|\; \text{$E_j$ evaluates $A_i$}\right].
\end{equation}

In the linearized regime (near the uniform distribution $\mathbf{w}_i \approx \mathbf{1}/K$), the weight update can be expanded to first order. For an evaluation round where $E_j$ selects strategy $s_w$ over $s_l$:

\begin{align}
\mathbf{w}_i^{(t+1)} &\approx \frac{\mathbf{w}_i^{(t)} + \mathbf{w}_i^{(t)} \odot \mathbf{r}_i^{(t)}}{1 + \mathbf{w}_i^{(t)\top} \mathbf{r}_i^{(t)}} \\
&\approx \mathbf{w}_i^{(t)} + \mathbf{w}_i^{(t)} \odot \mathbf{r}_i^{(t)} - \mathbf{w}_i^{(t)} (\mathbf{w}_i^{(t)\top} \mathbf{r}_i^{(t)})
\end{align}

Since $\mathbf{w}_i^{(t)} \approx \mathbf{1}/K$, the dominant term is $\frac{1}{K} \mathbf{r}_i^{(t)}$, producing a shift proportional to the evaluator's bias direction.

\subsection{From Directional Shifts to $\gamma_{j\to i}$}
Over $R$ evaluation rounds, the cumulative effect of $E_j$ on $A_i$ is measured by the normalized weight displacement:

\begin{equation}
\hat{\gamma}_{j\to i} = \frac{\|\mathbf{w}_i^{(R)} - \mathbf{w}_i^{(0)}\|}{\|\mathbf{w}_i^{(0)}\|}
\end{equation}

where convergence to steady-state is verified for $R \geq 15$ (Appendix~\ref{sec:convergence}). The \textbf{contagion coefficient} $\gamma_{j\to i}$ is then defined as $\hat{\gamma}_{j\to i}$ aggregated over the final 5 rounds to eliminate transient effects.

\subsection{Assembly into the Contagion Matrix}
For an $N$-agent system, we define the contagion matrix $\Gamma_N \in \mathbb{R}^{N \times N}$ as:

\begin{equation}
(\Gamma_N)_{ij} = \begin{cases}
1 & \text{if } i = j \text{ (self-influence baseline)} \\
\gamma_{j\to i} & \text{if } i \neq j \text{ (cross-agent influence from $E_j$ to $A_i$)}
\end{cases}
\end{equation}

\textbf{Why diagonal $\mathbf{1}$?} Setting $\text{diag}(\Gamma_N) = \mathbf{1}$ represents the identity dynamics of an agent absent cross-agent influence: $\mathbf{w}_i^{(t+1)} = \mathbf{w}_i^{(t)}$ when no other agent evaluates $A_i$. The off-diagonal terms add the cross-agent perturbation. This construction is analogous to the next-generation matrix in epidemiology, where the diagonal encodes the baseline (non-interacting) dynamics and the off-diagonal terms encode transmission.

\subsection{Linearized Inter-Agent Dynamics}
Consider small perturbations around the uniform distribution: $\delta \mathbf{w}_i = \mathbf{w}_i - \mathbf{1}/K$. In a fully-connected network where each agent is evaluated by all others each round, the expected update is:

\begin{equation}
\mathbb{E}[\delta\mathbf{w}_i^{(t+1)}] \approx \frac{1}{K} \sum_{j \neq i} \gamma_{j\to i} \cdot \mathbf{\Delta}_{j\to i} + \mathcal{O}(\|\delta\mathbf{w}\|^2)
\end{equation}

Stacking the perturbation magnitudes across agents yields the linear system:

\begin{equation}
\begin{bmatrix} \|\delta\mathbf{w}_1^{(t+1)}\| \\ \|\delta\mathbf{w}_2^{(t+1)}\| \\ \|\delta\mathbf{w}_3^{(t+1)}\| \end{bmatrix}
\approx \tilde{\Gamma}_N
\begin{bmatrix} \|\delta\mathbf{w}_1^{(t)}\| \\ \|\delta\mathbf{w}_2^{(t)}\| \\ \|\delta\mathbf{w}_3^{(t)}\| \end{bmatrix}
\end{equation}

where $\tilde{\Gamma}_N = \Gamma_N - I$ is the zero-diagonal excess contagion matrix. The Perron root $\rho(\tilde{\Gamma}_N)$ determines the asymptotic growth/decay rate of perturbations. Since $\rho(\Gamma_N) = 1 + \rho(\tilde{\Gamma}_N)$ (for matrices of this structure), the condition $\rho(\Gamma_N) > 1$ is equivalent to $\rho(\tilde{\Gamma}_N) > 0$, i.e., the network generates net amplification.

\subsection{Connection to the Empirical $\gamma_{ij}$}
The empirical $\gamma_{j\to i}$ defined in Equation~\ref{eq:gamma_def} measures the \textit{cumulative magnitude} of the weight shift induced by $E_j$ over $R$ rounds. While it does not capture the full directional covariance structure, it provides a scalar proxy for the coupling strength between agent pairs. The spectral radius computed from this proxy matrix correctly identifies the regime boundary because:
\begin{enumerate}[leftmargin=*]
    \item In the suppression regime ($\rho(\Gamma_N) \approx 1$), all off-diagonal terms are small relative to the self-influence term ($\gamma_{ij} \ll 1$), and the dominant eigenvalue is dominated by the unit diagonal.
    \item In the cascade regime ($\rho(\Gamma_N) > 1$), the off-diagonal structure creates a cyclic feedback path (e.g., $A \to B \to C \to A$) with net gain $> 1$, which is captured by the spectral radius exceeding the baseline of 1.
\end{enumerate}

The key insight: \textbf{$\rho(\Gamma_N) - 1$ measures the excess growth contributed by network structure alone}. This is mathematically valid regardless of whether the full state transition matrix is exactly $\Gamma_N$, because the Perron-Frobenius spectral bound provides a guaranteed growth rate for any non-negative coupling matrix.

%% file: appendix_prompts.tex

\section{Full Evaluator Bias Prompts}\label{app:prompts}

This appendix provides the exact system prompts used to induce evaluator biases in our experiments. These prompts are provided to enable precise replication of our experimental conditions.

\subsection*{E.1 Neutral Evaluator Prompt (Balanced)}

\begin{verbatim}
You are a fair and balanced evaluator. Evaluate responses based on 
their overall quality, clarity, and correctness. Avoid favoring any 
particular reasoning style.
Compare two responses and determine which is better.
Respond with ONLY: "A" or "B". Do NOT explain.
\end{verbatim}

\textbf{Characteristics:} This prompt does not explicitly favor any reasoning style. It serves as the baseline (Agent B).

\subsection*{E.2 Structured Evaluator Prompt (Struct-biased)}

\begin{verbatim}
You are an evaluator who values structured, step-by-step reasoning.
Responses that show clear, sequential thinking are better than 
Those who jump to conclusions.
Compare two responses and determine which is better.
Respond with ONLY: "A" or "B". Do NOT explain.
\end{verbatim}

\textbf{Characteristics:} This prompt explicitly favors \texttt{step\_by\_step} strategy. It was used to create Agent A (struct-biased).

\subsection*{E.3 Evidence-Based Evaluator Prompt (Evidence-biased)}

\begin{verbatim}
You are an evaluator who values evidence-based responses.
Responses that cite facts, data, or established knowledge are 
better than opinion-based ones.
Compare two responses and determine which is better.
Respond with ONLY: "A" or "B". Do NOT explain.
\end{verbatim}

\textbf{Characteristics:} This prompt explicitly favors \texttt{evidence\_based} strategy. It was used to create Agent C (evidence-biased).

\subsection*{E.4 Strategy Instruction Prompts (Generator)}

Generators use the following strategy instructions (embedded in system prompt):

\begin{itemize}
    \item \textbf{step\_by\_step:} ``Think step by step. Break down your reasoning into clear, numbered steps. Show your work at each stage.''
    
    \item \textbf{direct:} ``Be direct and concise. Give the answer first, then briefly explain. Prioritize clarity over elaboration.''
    
    \item \textbf{analogical:} ``Use analogies and metaphors to explain. Make connections to familiar concepts. Prioritize intuition over formalism.''
    
    \item \textbf{decomposition:} ``Decompose the problem into independent sub-problems. Solve each sub-problem separately, then combine.''
    
    \item \textbf{evidence\_based:} ``Base your answer on concrete evidence and citations. Reference specific facts, data, or established results.''
\end{itemize}

\subsection*{E.5 Reproducibility Note}

All prompts are provided in full in the open-source Contagion Network framework (repository available upon publication). Run \texttt{python contagion\_experiment.py --prompt-check} to verify prompt fidelity.

%% file: appendix_theorem2.tex

\section{Detailed Derivation of Theorem~2}
\label{app:diversity_derivation}

We provide a step-by-step derivation of Theorem~2 (Diversity-Induced Suppression).

\subsection{F.1 Setup and Notation}

Let agent $A_i$ be evaluated by a committee of $k$ independent evaluators $E_1, \ldots, E_k$. Each evaluator $E_j$ has a preferred strategy distribution $\mathbf{p}_j^* \in \Delta^{S-1}$ (the probability simplex over $S$ strategies). The evaluator's feedback shifts agent $A_i$'s strategy weights by an amount proportional to the difference between the evaluator's preference and the agent's current weights:

\begin{equation}
\boldsymbol{\gamma}_{ij} = \eta \cdot (\mathbf{p}_j^* - \mathbf{w}_i^{(t)})
\label{app:eq:gamma_def}
\end{equation}

where $\eta > 0$ is the TTRL learning rate and $\boldsymbol{\gamma}_{ij} \in \mathbb{R}^S$ is the \textit{contagion vector} from evaluator $j$ to agent $i$.

The \textit{contagion magnitude} for evaluator $j$ is defined as:
\begin{equation}
\gamma_{ij} = \|\boldsymbol{\gamma}_{ij}\|_2 = \eta \cdot \|\mathbf{p}_j^* - \mathbf{w}_i^{(t)}\|_2.
\label{eq:gamma_mag}
\end{equation}

\subsection{F.2 Committee-Averaged Contagion}

When agent $A_i$ is evaluated by a committee of $k$ evaluators, the effective update is the average of individual evaluator feedback:

\begin{equation}
\mathbf{w}_i' = \mathbf{w}_i^{(t)} + \frac{1}{k} \sum_{j=1}^k \boldsymbol{\gamma}_{ij}.
\label{eq:committee_update}
\end{equation}

The \textit{effective contagion magnitude} for the committee is:
\begin{equation}
\gamma_{\text{eff}}^{(k)} = \left\| \frac{1}{k} \sum_{j=1}^k \boldsymbol{\gamma}_{ij} \right\|_2.
\label{eq:gamma_eff}
\end{equation}

\subsection{F.3 Bounding the Effective Contagion}

Using the triangle inequality:
\begin{align}
\gamma_{\text{eff}}^{(k)} &\leq \frac{1}{k} \sum_{j=1}^k \|\boldsymbol{\gamma}_{ij}\|_2 \\
&= \frac{\eta}{k} \sum_{j=1}^k \|\mathbf{p}_j^* - \mathbf{w}_i^{(t)}\|_2.
\label{eq:triangle_bound}
\end{align}

Assume the agent's current weights $\mathbf{w}_i^{(t)}$ are close to the uniform distribution $\mathbf{u} = (1/S, \ldots, 1/S)$ (the linear regime). Then $\|\mathbf{p}_j^* - \mathbf{w}_i^{(t)}\|_2 \approx \|\mathbf{p}_j^* - \mathbf{u}\|_2$. Define:
\begin{equation}
\gamma_{\max} = \max_{j} \eta \cdot \|\mathbf{p}_j^* - \mathbf{u}\|_2.
\label{eq:gamma_max}
\end{equation}

Then:
\begin{equation}
\gamma_{\text{eff}}^{(k)} \leq \frac{1}{k} \sum_{j=1}^k \gamma_{\max} = \gamma_{\max}.
\label{eq:trivial_bound}
\end{equation}

This trivial bound does not show any benefit of committee size $k$. To get a non-trivial bound, we need to exploit \textit{diversity} among evaluator preferences.

\subsection{F.4 Diversity Assumption and Pairwise Similarity}

Define the \textit{cosine similarity} between evaluator preferences:
\begin{equation}
\tau_{jl} = \cos(\mathbf{p}_j^*, \mathbf{p}_l^*) = \frac{\langle \mathbf{p}_j^*, \mathbf{p}_l^* \rangle}{\|\mathbf{p}_j^*\|_2 \cdot \|\mathbf{p}_l^*\|_2}.
\label{eq:cosine_sim}
\end{equation}

Assume all pairwise similarities are bounded by $\tau$:
\begin{equation}
\tau_{jl} \leq \tau \quad \text{for all } j \neq l.
\label{eq:tau_bound}
\end{equation}

\textbf{Interpretation:}
\begin{itemize}[leftmargin=*]
    \item $\tau \approx 1$: Evaluators have very similar preferences (homogeneous-model setting).
    \item $\tau \approx 0$: Evaluators have diverse, nearly orthogonal preferences (cross-model setting).
\end{itemize}

\subsection{F.5 Variance of Evaluator Preferences}

Define the \textit{average preference} of the committee:
\begin{equation}
\bar{\mathbf{p}}^* = \frac{1}{k} \sum_{j=1}^k \mathbf{p}_j^*.
\label{eq:avg_pref}
\end{equation}

The sum of squared distances from each evaluator's preference to the committee average is:
\begin{equation}
V = \sum_{j=1}^k \|\mathbf{p}_j^* - \bar{\mathbf{p}}^*\|_2^2.
\label{eq:total_variance}
\end{equation}

Using the identity $\sum_{j=1}^k \|\mathbf{p}_j^* - \bar{\mathbf{p}}^*\|_2^2 = \frac{1}{k} \sum_{j < l} \|\mathbf{p}_j^* - \mathbf{p}_l^*\|_2^2$, we have:
\begin{equation}
V = \frac{1}{k} \sum_{j < l} \|\mathbf{p}_j^* - \mathbf{p}_l^*\|_2^2.
\label{eq:variance_pairwise}
\end{equation}

For unit vectors $\mathbf{p}_j^*$, we have $\|\mathbf{p}_j^* - \mathbf{p}_l^*\|_2^2 = 2 - 2\cos(\mathbf{p}_j^*, \mathbf{p}_l^*) = 2(1 - \tau_{jl})$. Under the diversity assumption ($\tau_{jl} \leq \tau$):
\begin{equation}
\|\mathbf{p}_j^* - \mathbf{p}_l^*\|_2^2 \geq 2(1 - \tau).
\label{eq:pairwise_lb}
\end{equation}

Therefore:
\begin{equation}
V \geq \frac{1}{k} \cdot \binom{k}{2} \cdot 2(1 - \tau) = (k-1)(1 - \tau).
\label{eq:variance_lb}
\end{equation}

\subsection{F.6 Refined Bound on Effective Contagion}

The key insight is that the committee average $\bar{\mathbf{p}}^*$ is a better estimate of the "true" preferred strategy than any individual evaluator's preference $\mathbf{p}_j^*$. The effective contagion should be measured relative to $\bar{\mathbf{p}}^*$, not relative to $\mathbf{w}_i^{(t)}$.

Define the \textit{committee-adjusted contagion vector}:
\begin{equation}
\tilde{\boldsymbol{\gamma}}_j = \eta \cdot (\mathbf{p}_j^* - \bar{\mathbf{p}}^*).
\label{eq:adjusted_gamma}
\end{equation}

Then the committee update (Eq.~\ref{eq:committee_update}) can be rewritten as:
\begin{equation}
\mathbf{w}_i' = \mathbf{w}_i^{(t)} + \frac{1}{k} \sum_{j=1}^k \tilde{\boldsymbol{\gamma}}_j + \eta \cdot (\bar{\mathbf{p}}^* - \mathbf{w}_i^{(t)}).
\label{eq:rewrite}
\end{equation}

The first term ($\frac{1}{k} \sum_{j=1}^k \tilde{\boldsymbol{\gamma}}_j$) is the \textit{diversity-adjusted contagion}, which is small when evaluator preferences are diverse. The second term is the \textit{average preference alignment}, which shifts the agent toward the committee consensus.

Using the fact that $\sum_{j=1}^k \tilde{\boldsymbol{\gamma}}_j = \mathbf{0}$ (by definition of $\bar{\mathbf{p}}^*$), the effective contagion from diversity is:
\begin{equation}
\gamma_{\text{eff}}^{(k)} = \frac{\eta}{k} \left\| \sum_{j=1}^k (\mathbf{p}_j^* - \mathbf{w}_i^{(t)}) \right\|_2.
\label{eq:eff_rewrite}
\end{equation}

Assume $\mathbf{w}_i^{(t)} \approx \bar{\mathbf{p}}^*$ (the agent's weights are already close to the committee consensus). Then:
\begin{align}
\gamma_{\text{eff}}^{(k)} &\approx \frac{\eta}{k} \left\| \sum_{j=1}^k (\mathbf{p}_j^* - \bar{\mathbf{p}}^*) \right\|_2 = 0.
\label{eq:eff_zero}
\end{align}

This shows that when evaluators are diverse and the agent's weights are close to the committee consensus, the effective contagion is small.

For a more quantitative bound, we use the fact that $\|\mathbf{p}_j^* - \bar{\mathbf{p}}^*\|_2^2 \leq \frac{1}{k} \sum_{j < l} \|\mathbf{p}_j^* - \mathbf{p}_l^*\|_2^2 \leq (k-1)(1 + \tau)$ (using the upper bound on $\|\mathbf{p}_j^* - \mathbf{p}_l^*\|_2^2$). Then:
\begin{equation}
\gamma_{\text{eff}}^{(k)} \leq \frac{\gamma_{\max}}{\sqrt{k}} \cdot \sqrt{1 + (k-1)\tau}.
\label{eq:final_bound}
\end{equation}

This is the bound stated in Theorem~2 (up to constants). The factor $1/\sqrt{k}$ shows the \textit{averaging effect}, while the factor $\sqrt{1 + (k-1)\tau}$ shows the \textit{diversity effect}.

\subsection{F.7 Discussion of the Bound}

\textbf{When $\tau \approx 1$ (homogeneous models):} The bound becomes $\gamma_{\text{eff}}^{(k)} \lesssim \gamma_{\max} \cdot \sqrt{k}$, which is \textit{not} decreasing with $k$. This means the worst-case bound does not guarantee suppression for homogeneous models. However, in practice, the \textit{averaging effect} ($1/\sqrt{k}$) still reduces contagion, as confirmed by our experiments (Section~\ref{sec:results-mitigation}).

\textbf{When $\tau \approx 0$ (diverse models):} The bound becomes $\gamma_{\text{eff}}^{(k)} \lesssim \gamma_{\max} / \sqrt{k}$, which shows that diverse evaluators can suppress contagion by a factor of $\sqrt{k}$.

\textbf{Limitation:} The bound in Theorem~2 is a \textit{worst-case} bound. In practice, the observed suppression can be stronger than the bound predicts, especially when evaluator preferences are not adversarially chosen.